\documentclass[letterpaper]{article} 
\usepackage{aaai2026}  
\usepackage{times}  
\usepackage{helvet}  
\usepackage{courier}  
\usepackage[hyphens]{url}  
\usepackage{graphicx} 
\usepackage{natbib}  
\usepackage{caption} 
\usepackage[ruled,vlined,linesnumbered]{algorithm2e}
\usepackage{float}
\usepackage{newfloat}
\usepackage{listings}
\DeclareCaptionStyle{ruled}{labelfont=normalfont,labelsep=colon,strut=off} 
\floatstyle{ruled}
\newfloat{listing}{tb}{lst}{}
\floatname{listing}{Listing}
\usepackage{amsmath}
\usepackage{amsthm}  
\newtheorem{theorem}{Theorem}
\usepackage{graphicx}  
\usepackage{tcolorbox} 
\usepackage{multirow}
\usepackage{colortbl}
\usepackage{bbding}
\usepackage{makecell}
\usepackage{array}  

\usepackage{amssymb,booktabs}
\usepackage{tabularx}
\usepackage{bbding}
\usepackage{pifont}
\usepackage{makecell}

\usepackage{enumitem}

\title{Gradient Mirage: Trainable yet Label-Unidentifiable Gradients in Large Language Model Split Learning}

\author {
    Shiyu Miao\textsuperscript{\rm 1},
    Yunlong Mao\textsuperscript{\rm 1}\thanks{Corresponding Author},
    Zirui Huang\textsuperscript{\rm 1},
    Liang Yao\textsuperscript{\rm 2}, 
    Tianshuo Zheng\textsuperscript{\rm 3}, \\
    Yanhui Gu\textsuperscript{\rm 4},
    Fan Liu\textsuperscript{\rm 2},
    Sheng Zhong\textsuperscript{\rm 1},
}
\affiliations {
    \textsuperscript{\rm 1}State Key Laboratory for Novel Software Technology, Nanjing University \\
    \textsuperscript{\rm 2}College of Computer Science and Software Engineering, Hohai University \\
    \textsuperscript{\rm 3}School of Mathematics, Nanjing University
    \textsuperscript{\rm 4} Nanjing Normal University\\
    maoyl@nju.edu.cn, zhongsheng@nju.edu.cn
}

\usepackage{bibentry}

\usepackage{bm}
\newtheorem{definition}{Definition}
\begin{document}

\maketitle

\begin{abstract}
Gradient matching attacks (GMAs) in LLM split learning (SL) rely on a critical yet underexplored assumption: the gradient exposed at the split interface is a faithful derivative of the client’s full-label training objective. This gradient–objective consistency allows a curious server to recover private labels by searching for a sequence whose induced gradient explains the observation. We propose Gradient Mirage, a defense that breaks this consistency without discarding the optimization utility of the backward signal. Our key idea is to induce the adversary to solve a misspecified inverse problem, in which no plausible label sequence in the sequence space can explain the observed gradients.
Concretely, Gradient Mirage achieves this by inducing inconsistency across three dimensions: objective, direction, and scale. 
Selective Autoregressive Supervision derives the exposed gradient from a masked surrogate loss rather than the full-label objective assumed by the attacker; Scale Blinding then applies randomized multiplicative rescaling, obscuring the gradient's natural magnitude; and Directional Privatization further randomizes the gradient direction while preserving its magnitude through the von Mises–Fisher (vMF) mechanism under a directional metric differential privacy guarantee.
Crucially, utility is preserved: the Top segment still learns from all target tokens via Dual-Track Backpropagation, the exposed gradient remains informative since each supervised token retains its complete autoregressive context, and Bottom-Gradient Recovery restores the effective gradient for Bottom-segment optimization.
Extensive experiments show that Gradient Mirage provides substantially stronger protection than existing defenses under comparable fine-tuning performance, achieving a better privacy-utility trade-off.

\end{abstract}

\begin{links}
    \link{Code}{https://github.com/StevenMsy/GMA-SL}
\end{links}

\begin{figure}[t]
    \centering
    \includegraphics[width=0.95\linewidth]{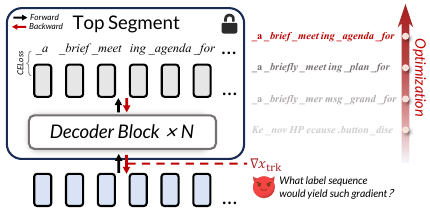}
    \caption{\textit{Illustration of label leakage in label-shielded split learning.} Given the gradient returned from the client’s Top segment to the Trunk segment, a curious-but-honest server can recover the label sequence by optimization.}
    \label{fig:fig1}
\end{figure}

\section{Introduction}
Machine Learning-as-a-Service (MLaaS) has become a dominant paradigm for deploying large-scale models~\cite{mlaas1, mlaas2}, allowing data owners to outsource computation and reduce infrastructure overhead. In its training setting, commonly known as Training-as-a-Service (TaaS), Split Learning (SL)~\cite{SL2, SL3, SL1, SplitLLM, SL4, SL5} mitigates direct data exposure by partitioning model training between the client and server. We focus on LLM supervised fine-tuning under \emph{Label-Shielded Split Learning}~\cite{unleashing}, where the compute-intensive Trunk segment is outsourced to the server, while the Bottom and Top segments remain on the client to keep private inputs, labels, and supervised loss local.

However, prior research~\cite{SplitLLM} shows that this LLM-SL system fails to provide effective privacy protection against a \textit{curious-but-honest} server. In particular, the client’s private data becomes highly vulnerable to GMAs~\cite{dlg, idlg, tag, lamp, film} at the split interface between the Trunk and Top segment. As illustrated in Fig.~\ref{fig:fig1}, the adversary’s guiding question is essentially: \textit{"What label sequence would yield such gradients?"} By optimizing for a label sequence whose interface gradients match the observed ones, it can recover sensitive training content even without explicit access to labels.

We term this attack, launched at the split interface between the Trunk and Top segment, GMA-SL. GMA-SL constitutes a particularly compelling variant of classical GMA: We delineate its relationship to, and key distinctions from classical GMA in Section~\ref{sec2.2:classical GMA}, and provide the concrete attack instantiation and implementation details in Section~\ref{sec2.3:GMA-SL}. 

GMA-SL poses a particularly severe gradient leakage threat in LLM-SL. Unlike classical GMA, it matches activation gradients at the split interface, which retain explicit batch structure and are closely tied to local supervision. Moreover, because the cut-layer representation is already available, the adversary primarily optimizes the label sequence rather than jointly reconstructing inputs and labels, making the attack both simpler and harder to defend against.  

Motivated by this challenge, we propose Gradient Mirage, a defense
method tailored to GMA-SL. Its key insight is that \emph{what the model learns} and \emph{what the exposed gradient reveals} need not coincide. Gradient Mirage deliberately decouples
these two objectives. Under the constraint of minimally affecting
optimization, we seek an exposed signal residing in a ``gray zone''
of the gradient space: to the server-side optimizer, it remains a
serviceable backward signal for trunk updates; to the adversary, it
looks like a gradient, yet no suitable label sequence in the
sequence space can explain it under the attacker's assumed objective.
Consequently, an adversary performing gradient matching against this
signal is in effect solving a misspecified inverse problem---one whose
solution set is not anchored to the true labels.

Gradient Mirage realizes this design by inducing inconsistency across three dimensions. Dual-Track Backpropagation
confines the full-label loss $\mathcal{L}_{\mathrm{F}}$ to local
learning at the Top segment, while the transmitted gradient is
generated by a masked surrogate loss $\mathcal{L}_{\mathrm{M}}$
via Selective Autoregressive Supervision, yielding \emph{Objective
Inconsistency} ($\mathcal{L}_{\mathrm{F}} \neq
\mathcal{L}_{\mathrm{M}}$): making the attacker's assumed objective misspecified at its root. Directional Privatization then randomizes
the gradient direction via vMF mechanism~\cite{DirDP,vonMises,fisher} with a directional metric differential privacy guarantee, inducing
\emph{Directional Inconsistency}
($\frac{g_{\mathrm{release}}}{\|g_{\mathrm{release}}\|} \neq
\frac{\nabla_x \mathcal{L}_{\mathrm{M}}}{\|\nabla_x
\mathcal{L}_{\mathrm{M}}\|}$). Scale Blinding applies randomized
multiplicative rescaling, introducing \emph{Scale Inconsistency}
($\|g_{\mathrm{release}}\| \neq \|\nabla_x
\mathcal{L}_{\mathrm{M}}\|$): even when the
direction is approximately matched, the gradient's natural magnitude
is no longer trustworthy. Meanwhile, optimization compatibility is supported by design: the Top segment still learns under full supervision, the vMF mechanism retains the gradient magnitude with a provable lower bound on the amplitude signal-to-noise ratio, and Bottom-Gradient Recovery restores the effective gradient for bottom optimization.
Gradient Mirage thereby disrupts the label-identifiability of the
exposed gradient while retaining stable and effective SL.

We present, to the best of our knowledge, the first defense shown to be effective against GMA-SL, a particularly severe gradient leakage threat in LLM-SL. As GMA-SL constitutes a specialized form of GMA in the SL setting, we adapt representative GMA defenses and a stringent sequence-level differential privacy (DP) mechanism as practical baselines, demonstrating that our method consistently achieves superior privacy-utility trade-offs.

The contributions are summarized as follows:
$\boldsymbol{(i)}$ We identify gradient--objective consistency as the key vulnerability underlying GMA-SL and provide a systematic characterization of this attack in label-shielded LLM-SL, clarifying its attack surface and key distinctions from classical GMA. $\boldsymbol{(ii)}$ We propose Gradient Mirage, a defense that decouples what the model learns from what the gradient reveals, injecting objective, directional, and scale inconsistencies into the released gradient while preserving optimization utility.  $\boldsymbol{(iii)}$ We conduct extensive experiments across multiple LLMs and datasets, adapting representative GMA defenses and a rigorous DP method as baselines. Gradient Mirage consistently achieves a superior privacy-utility trade-off.

\section{Preliminary}
\label{sec:related_work}
\begin{figure}[t]
    \centering
    \includegraphics[width=0.88\linewidth]{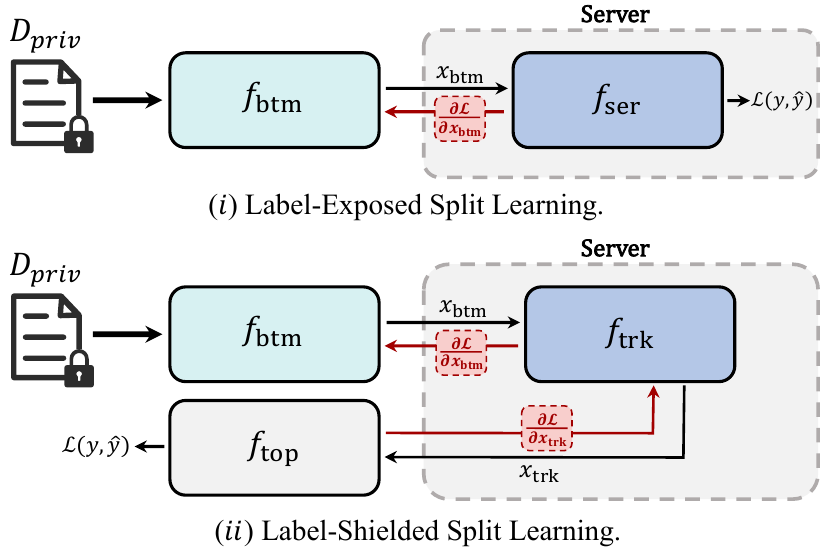}
    \caption{\textit{Two classical split learning paradigms:} \textbf{label-exposed split learning} and \textbf{label-shielded split learning.}}
    \label{fig:SL-Arch}
\end{figure}

\subsection{Split Learning}

SL partitions a neural network across clients and a central server, so that clients hold only a small portion of the model while the majority of parameters—and thus the computational burden—reside on the server. As illustrated in Fig.~\ref{fig:SL-Arch}, typical SL can be categorized into two paradigms: (I) \textbf{Label-Exposed SL,} where the network is split into two parts and clients disclose labels to the server; (II) \textbf{Label-Shielded SL,} where labels remain on-device and the network is partitioned into three segments, with the front and tail placed on the client and the middle trained on the server.

In LLM supervised fine-tuning, next-token labels are obtained by shifting the input sequence by one position, so exposing labels would effectively reveal the private text. We therefore consider \textbf{Label-Shielded SL}.
Given a client's private dataset \(D_{\mathrm{pri}}\), the overall network \(f_{\theta}\) is partitioned into a \emph{Bottom} \(f_{\mathrm{btm}}\), a \emph{Trunk} \(f_{\mathrm{trk}}\), and a \emph{Top} \(f_{\mathrm{top}}\). The server hosts \(f_{\mathrm{trk}}\), while the client retains \(f_{\mathrm{btm}}\) and \(f_{\mathrm{top}}\). 
For a mini-batch \(x \in D_{\mathrm{pri}}\), the client transmits \(x_{\mathrm{btm}} = f_{\mathrm{btm}}(x)\) to the server, and receives
\(x_{\mathrm{trk}} = f_{\mathrm{trk}}(x_{\mathrm{btm}})\) from the server.
The client then obtains the prediction
\(\hat{y} = f_{\mathrm{top}}(x_{\mathrm{trk}})\) and computes the training objective \(\mathcal{L}(\hat{y}, y)\) with label $y$ kept local.
During backpropagation, the exchanged gradients
\(\partial \mathcal{L} / \partial x_{\mathrm{trk}}\) allow the server to update the Trunk, while the client updates the Bottom and Top using \(\partial \mathcal{L} / \partial x_{\mathrm{btm}}\) and \(\partial \mathcal{L} / \partial x_{\mathrm{top}}\).

\begin{figure}[t]
    \centering
    \includegraphics[width=0.82\linewidth]{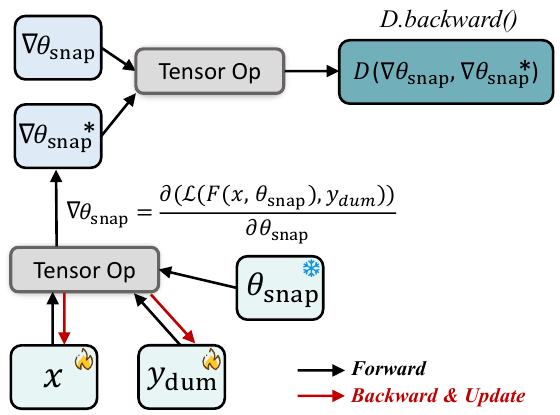}
    \caption{\textit{Classical gradient-matching attack with a frozen parameter snapshot.} At training step~$t$, the adversary fixes the model parameters $\boldsymbol{\theta}^{(t)}$ (the snapshot) and optimizes dummy inputs \textbf{$x_{dum}$} and labels \textbf{$y_{dum}$}.}
    \label{fig:tra_gma}
\end{figure}

\begin{figure*}[t]
    \centering
    \includegraphics[width=0.98\linewidth]{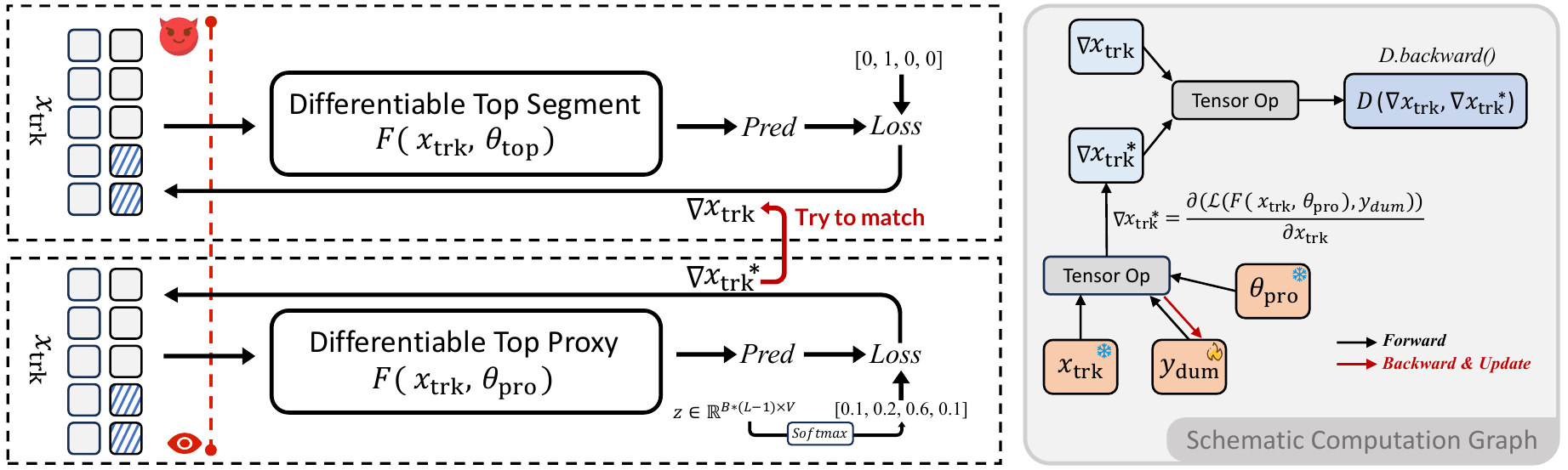}
    \caption{\textit{Gradient matching attack in split learning (GMA-SL).} The adversary observes the gradient with respect to the cut-layer representation $\nabla_{}x_{trk}$, produced by the victim’s Top segment, then optimizes a dummy label $y_{dum}$ using a differentiable proxy top model $F(\cdot;\boldsymbol{\theta}_{\mathrm{pro}})$, such that the proxy-induced gradient $\nabla_{x_{\mathrm{trk}}}\mathcal{L}\!\big(F(x_{\mathrm{trk}};\boldsymbol{\theta}_{\mathrm{pro}}),\tilde{y}\big)$ matches the observed one. The schematic computation graph is shown on the right side.}
    \label{overview}
\end{figure*}

\subsection{Classical Gradient Matching Attack}
\label{sec2.2:classical GMA}
Classical GMA arises in distributed training, where multiple participants collaboratively optimize a shared model while keeping their data local. During training, they exchange model updates—typically gradients—rather than raw data. This gradient exchange creates the primary attack surface exploited by gradient-matching methods. As illustrated in Fig.~\ref{fig:tra_gma}, classical GMA assumes access to the model snapshot $\theta^{(t)}$ used to generate the observed gradient; reconstruction is then performed with respect to this fixed snapshot.

Let $F(\cdot;\theta)$ be the model and $\mathcal{L}(\cdot,\cdot)$ the training loss. At step $t$, the shared gradient is
\begin{equation}
\mathbf{g}^{(t)} \;=\; \nabla_{\boldsymbol{\theta}} \, \mathcal{L}\!\left( F\!\left(x^{(t)};\boldsymbol{\theta}^{(t)}\right), y^{(t)} \right),
\end{equation}
where $\boldsymbol{\theta}^{(t)}$ denotes the model-parameter snapshot at step $t$, and
$\left(x^{(t)}, y^{(t)}\right)$ is the private minibatch
that generated $\mathbf{g}^{(t)}$.
The adversary introduces dummy variables $(x_{dum}, y_{dum})$ and solves:

\begin{equation}
(x^{*}, y^{*})
\;=\;
\arg\min_{\tilde{x},\,\tilde{y}}\;
D\!\left(
\nabla_{\boldsymbol{\theta}}\,\mathcal{L}\!\left(F(\tilde{x};\boldsymbol{\theta}^{(t)}),\tilde{y}\right),
\mathbf{g}^{(t)}
\right),
\end{equation}
where $D(\cdot,\cdot)$ is a gradient-discrepancy measure (e.g., $D(\mathbf{a},\mathbf{b})=\lVert \mathbf{a}-\mathbf{b}\rVert_2^2$, cosine distance, or a weighted combination).
The optimization is carried out by gradient descent on the dummy variables:
\begin{equation}
    \begin{aligned}
    (\tilde{x}, \tilde{y}) \leftarrow (\tilde{x}, \tilde{y})
    -
    \eta \nabla_{(\tilde{x}, \tilde{y})}
    D\!\left(
    \nabla_{\boldsymbol{\theta}} \mathcal{L}\!\left(F(\tilde{x};\boldsymbol{\theta}^{(t)}), \tilde{y}\right),
    \mathbf{g}^{(t)}
    \right),
    \end{aligned}
\end{equation}
while keeping $\boldsymbol{\theta}^{(t)}$ unchanged throughout the optimization.

\subsection{Gradient Matching Attack in SL}
\label{sec2.3:GMA-SL}
The GMA in the SL setting can be viewed as a variant of the classical GMA. Unlike classical GMA, matching is performed in the \emph{activation space} at the split interface rather than directly on $\nabla_{\boldsymbol{\theta}}$. The schematic overview of the entire pipeline is shown in Fig.~\ref{overview}.

Let the Top segment be $F(\cdot;\theta_{top})$ and the top proxy held by the adversary $F(\cdot;\theta_{pro})$. Given label $y$, the client locally computes prediction $\hat{y}=F(x_{trk};\theta_{top})$ and loss $\mathcal{L}(\hat{\mathbf{y}},\mathbf{y})$. The gradient propagated back to the cut layer is given by:
\begin{equation}
\mathbf{g}_{\mathrm{trk}}\;=\;
\nabla_{x_{\mathrm{trk}}}\,
\mathcal{L}\!\left(F(x_{\mathrm{trk}};\boldsymbol{\theta}_{\mathrm{top}}),\,y\right).
\end{equation}

The attacker is assumed to observe $\mathbf{g}_{\mathrm{trk}}$. The goal is to recover a plausible label $\tilde{y}$ that yields a matching gradient signal. The attacker uses a differentiable proxy Top segment $F(\cdot;\theta_{\mathrm{pro}})$ and introduces dummy
variables $\tilde{\mathbf{x}}_{\mathrm{trk}}$ and $\tilde{\mathbf{y}}$. The proxy-induced gradient is
\begin{equation}
\tilde{\mathbf{g}}_{\mathrm{trk}}(\tilde{y})
=
\nabla_{x_{\mathrm{trk}}}\,
\mathcal{L}\!\left(
F(x_{\mathrm{trk}};\theta_{\mathrm{pro}}),
\tilde{y}
\right),
\end{equation}
The attack solves the following matching problem:
\begin{equation}
\tilde{y}^{*}
=
\arg\min_{\tilde{y}}
D\!\left(
\mathbf{g}_{\mathrm{trk}},
\tilde{\mathbf{g}}_{\mathrm{trk}}(x_{\mathrm{trk}},\tilde{y})
\right),
\end{equation}
where $D(\cdot,\cdot)$ is a discrepancy measure.

The optimization is carried out by gradient descent on the
dummy variables:
\begin{equation}
\tilde{y}
\leftarrow\tilde{y}-\eta\,
\nabla_{\tilde{y}}
D\!\left(
\mathbf{g}_{\mathrm{trk}},
\tilde{\mathbf{g}}_{\mathrm{trk}}(x_{\mathrm{trk}},\tilde{y})
\right),
\end{equation}
while $\theta_{\mathrm{pro}}$ is held fixed during the inner-loop optimization.

Compared with classical GMA, GMA-SL yields the following advantages. $\boldsymbol{(i)}$ \textbf{Batch-separable matching.} In SL, the observed gradient
$\nabla_{x_{\mathrm{trk}}}\mathcal{L}\in\mathbb{R}^{B\times L\times D}$
retains explicit batch structure, allowing samples to be matched
independently except for the scalar coupling introduced by batch-averaged
loss. The only coupling arises from the scalar coefficient induced by averaging the loss over the batch (discussed in Section~2.4). This behavior contrasts with classical GMA, where the attacker matches gradients with respect to model parameters (e.g., $\nabla_{\mathbf{W}}$), which implicitly aggregates information across the batch and can entangle samples. $\boldsymbol{(ii)}$ \textbf{A simpler optimization objective.} With $x_{\mathrm{trk}}$ fixed, the attacker mainly needs to optimize a dummy label $\tilde{y}$. Compared to classical GMA-where both the input $\tilde{x}$ and label $\tilde{y}$ are typically optimized-this reduces the number of free variables and mitigates optimization interference. At the same time, SL-based GMA also introduces new challenges. $\bm{(i)}$ \textbf{Limited gradient observability. } The attacker can only match gradients defined at the split interface (i.e., gradients with respect to activations), rather than the full gradient over all model parameters. In classical GMA, gradients from deeper layers can sometimes be weighted or exploited to improve reconstruction, whereas GMA-SL is restricted to $\nabla_{\mathbf{x_{\mathrm{trk}}}}$. $\bm{(ii)}$ \textbf{Semi-white-box scenario limitation.} The attacker does not have access to an exact snapshot of the victim’s top model parameters, making the setting non–white-box. Instead, we consider the semi–white-box assumption~\cite{SplitLLM} in which the attacker only knows the initialization (pretrained weight) used for LLM fine-tuning.

\subsection{Loss in Autoregressive Decoder Structure}
In this section, we formalize the autoregressive training loss used throughout the paper and clarify how the loss is computed by the client and the adversary under our threat model introduced in section~\ref{sec3:threat_model}.

For a mini-batch of size \(B\) and sequence length \(L\), let
$\mathbf{H}\in\mathbb{R}^{B\times L\times D}$
denote the input feature tensor (token-wise hidden states) at the input of the language-modeling head, and let $\mathbf{M}\in\{0,1\}^{B\times L}$
be the attention mask, where \(\mathbf{M}_{b,\ell}=1\) indicates that position \(\ell\) in example \(b\) is a valid token contributing to training, and \(\mathbf{M}_{b,\ell}=0\) indicates padding or an ignored position.
The model produces next-token logits
\begin{equation}
    \mathbf{L} = f_{\mathrm{LM}}(\mathbf{H};\boldsymbol{\theta}) \in \mathbb{R}^{B\times L\times V},
\end{equation}
where \(V\) is the vocabulary size and \(\boldsymbol{\theta}\) are model parameters.

Let \(\mathbf{Y}\in\{1,\dots,V\}^{B\times L}\) be the token id matrix for the same sequences.
With the standard one-step shift, the training target for position \(\ell\) is \(\mathbf{Y}_{b,\ell+1}\).
Define the per-token cross-entropy
\begin{equation}
    \ell_{b,\ell}(\boldsymbol{\theta})=-\log \mathrm{Softmax}\!\left(\mathbf{L}_{b,\ell,:}\right)_{\mathbf{Y}_{b,\ell+1}}.
\end{equation}
We mask out invalid positions and (optionally) the final position, yielding the masked autoregressive loss
\begin{equation}
\label{equ:loss_sum}
    \mathcal{L}_{\mathrm{AR}}(\boldsymbol{\theta})=
    \frac{1}{\sum_{b=1}^{B}\sum_{\ell=1}^{L-1}\mathbf{M}_{b,\ell+1}}
    \sum_{b=1}^{B}\sum_{\ell=1}^{L-1}
    \mathbf{M}_{b,\ell+1}\,\ell_{b,\ell}(\boldsymbol{\theta}).
\end{equation}

For the clients, they perform the computation in Eq.~\ref{equ:loss_sum} locally at the end of the Top segment. For the adversary, they execute:
\begin{equation}
    \begin{aligned}
    &\tilde{\mathbf{Z}} = F(x_{\mathrm{trk}};\theta_{\mathrm{pro}}) \in \mathbb{R}^{B\times L\times V}, \\
    &\tilde{\ell_{b,\ell}}(\boldsymbol{\theta})=-\log \mathrm{Softmax}\!\left(\tilde{\mathbf{L}}_{b,\ell,:}\right)_{\tilde{\mathbf{Y}}_{b,\ell+1}}, \\
    \mathcal{L}_{\mathrm{AR}}(\boldsymbol{\theta})=&
    \frac{1}{\sum_{b=1}^{B}\sum_{\ell=1}^{L-1}\mathbf{M}_{b,\ell+1}}
    \sum_{b=1}^{B}\sum_{\ell=1}^{L-1}
    \mathbf{M}_{b,\ell+1}\,\tilde{\ell_{b,\ell}}(\boldsymbol{\theta}).
    \end{aligned}
\end{equation}
The adversary fully knows the normalization factor
\(N=\sum_{b=1}^{B}\sum_{\ell=1}^{L-1}\mathbf{M}_{b,\ell+1}\).
This is because, in addition to the cut-layer representation \(\mathbf{x}_{\mathrm{btm}}\),
the attention mask \(\mathbf{M}\) must also be transmitted to the server: the server-side
trunk requires \(\mathbf{M}\) to construct attention score matrices and to carry out the
forward pass. Since this information is a functional prerequisite for executing the
protocol, even the weakest adversarial assumptions in the LLM-SL setting typically grant
the server access to \(\mathbf{M}\), and hence to \(N\).

\subsection{Batch-Separable Matching}
In this section, we elaborate on advantage $\boldsymbol{(i)}$ of the adversary in GMA-SL, as introduced in Section~\ref{sec2.3:GMA-SL}, namely why samples within the same mini-batch do not interfere with one another during gradient matching.
We begin with the computational details of gradient matching in SL.

Let \(B\) be the batch size, \(L\) the sequence length, and \(V\) the vocabulary size. The server observes $\boldsymbol{(i)}$ the smashed data \(x_{\mathrm{trk}}\in\mathbb{R}^{B\times L\times D}\), $\boldsymbol{(ii)}$ the attention mask \(\mathbf{M}\in\{0,1\}^{B\times L}\), and $\boldsymbol{(iii)}$ the back-propagated gradient w.r.t.\ the smashed data \(\mathbf{G} = \mathbf{g}_{\mathrm{trk}}\in\mathbb{R}^{B\times L\times D}\), where
\(
\mathbf{g}_{\mathrm{trk}} \;=\; \nabla_{x_\mathbf{trk}}\,\mathcal{L}_{\mathrm{AR}}(x_\mathbf{trk};\theta_{\mathrm{top}}) .\) The attacker optimizes a dummy soft label tensor \(\tilde{\mathbf{Z}} \in \mathbb{R}^{B\times L\times V}\) (logits in label space), which is transformed into a distribution via temperature-scaled softmax:
\begin{equation}
\mathbf{P} \;=\; \mathrm{Softmax}\!\left(\tilde{\mathbf{Z}}/\tau\right)\in[0,1]^{B\times L\times V},
\end{equation}
where \(\tau>0\) is a temperature hyperparameter. For the \textit{next-token soft target distribution}, the per-token cross-entropy is
\begin{equation}
\ell_{b,\ell}
\;=\;
-\sum_{v=1}^{V} \mathbf{P}_{b,\ell+1,v}\,
\log \mathrm{Softmax}\!\left(\tilde{\mathbf{L}}_{b,\ell,:}\right)_{v}.
\end{equation}

Masking is applied to ignore padding positions (and effectively excludes the final token by using the \(\ell+1\) target):
\begin{equation}
\mathcal{L}_{\mathrm{soft}}(x_{\mathrm{trk}},\tilde{\mathbf{Z}};\boldsymbol{\theta}_{\mathrm{pro}})
\;=\;
\frac{1}{N}
\sum_{b=1}^{B}\sum_{\ell=1}^{L-1}
\mathbf{M}_{b,\ell+1}\,\ell_{b,\ell},
\end{equation}
where $N=\sum_{b=1}^{B}\sum_{\ell=1}^{L-1}\mathbf{M}_{b,\ell+1}$.
From $\mathcal{L}_{\mathrm{soft}}$, the attacker computes the gradient with respect to the observed smashed data $x_{\mathrm{trk}}$:
\begin{equation}
\tilde{\mathbf{g}}_{\mathrm{trk}}(\tilde{\mathbf{Z}})
=\nabla_{x_\mathbf{trk}}\,\mathcal{L}_{\mathrm{soft}}(x_\mathrm{trk}, \tilde{\mathbf{Z}}; {\theta}_{\mathrm{pro}}) \in \mathbb{R}^{B\times L\times D}.
\end{equation}

The attack then minimizes a discrepancy between the induced gradient \(\tilde{\mathbf{G}}(\tilde{\mathbf{Z}})\) and the observed gradient \(\mathbf{G}\), \textbf{where $\widetilde{\mathbf{G}}(\widetilde{\mathbf{Z}}) \triangleq \tilde{\mathbf{g}}_{\mathrm{trk}}(\widetilde{\mathbf{Z}})$ is exactly the gradient defined in Eq. (14); we adopt the uppercase notation hereafter to emphasize its role as a tensor being matched against $\mathbf{G}$.}
Taking TAG as an illustrative example, we define the gradient-matching objective as
\begin{equation}
\begin{aligned}
\hspace{0.5em}\mathcal{L}_{\mathrm{TAG}}(\tilde{\mathbf{Z}})
=&
\sum_{b=1}^{B}\sum_{\ell=1}^{L}\mathbf{M}_{b,\ell}
\Big(
\beta \big\|\tilde{\mathbf{G}}_{b,\ell,:}(\tilde{\mathbf{Z}})-\mathbf{G}_{b,\ell,:}\big\|_{2}^{2}
\\&+ (1-\beta)\big\|\tilde{\mathbf{G}}_{b,\ell,:}(\tilde{\mathbf{Z}})-\mathbf{G}_{b,\ell,:}\big\|_{1}
\Big),
\end{aligned}
\end{equation}
where \(\beta\in[0,1]\) controls the trade-off between the \(\ell_{2}\) and \(\ell_{1}\) discrepancies.
The key optimization variable is \(\tilde{\mathbf{Z}}\), but the matching loss depends on \(\tilde{\mathbf{Z}}\) only through the intermediate gradient \(\mathbf{G}(\tilde{\mathbf{Z}})=\nabla_{x_{\mathrm{trk}}}\mathcal{L}_{\mathrm{soft}}\). Therefore, updating \(\tilde{\mathbf{Z}}\) requires differentiating \emph{through a gradient}, i.e., computing a mixed second-order derivative:

\begin{equation}
\begin{aligned}
\hspace{1em}\nabla_{\tilde{\mathbf{Z}}}\mathcal{L}_{\mathrm{TAG}}(\tilde{\mathbf{Z}})
&=
\frac{\partial \mathcal{L}_{\mathrm{TAG}}}{\partial \tilde{\mathbf{G}}}
\cdot
\underbrace{\frac{\partial \tilde{\mathbf{G}}}{\partial \tilde{\mathbf{Z}}}}_{\text{requires 2nd-order terms}} \\
=
\frac{\partial \mathcal{L}_{\mathrm{TAG}}}{\partial \tilde{\mathbf{G}}}&
\cdot
\frac{\partial}{\partial \tilde{\mathbf{Z}}}
\Big(
\nabla_{x_\mathrm{trk}}\mathcal{L}_{\mathrm{soft}}(x_\mathrm{trk},\tilde{\mathbf{Z}};\boldsymbol{\theta}_\mathrm{pro})
\Big) \\
=\frac{\partial \mathcal{L}_{\mathrm{TAG}}}{\partial \tilde{\mathbf{G}}}&
\cdot\nabla^2_{\tilde{\mathbf{Z}},\,x_\mathbf{trk}} \mathcal{L}_{\mathrm{soft}}(x_\mathrm{trk}, \tilde{\mathbf{Z}};\boldsymbol{\theta}_\mathrm{pro}).
\end{aligned}
\end{equation}

In the implementation, we obtain \(\mathbf{G}(\tilde{\mathbf{Z}})\) via automatic differentiation while retaining the computational graph (\texttt{create\_graph=True})\footnote{In PyTorch, setting \texttt{create\_graph=True} in \texttt{torch.\allowbreak autograd.\allowbreak grad} constructs a computation graph for the differentiation operation itself, i.e., it treats the returned gradient as a differentiable output of an autograd-tracked operator.}, which enables backpropagation from \(\mathcal{L}_{\mathrm{TAG}}\) to \(\tilde{\mathbf{Z}}\).

Finally, the adversary decodes discrete token predictions by $
\tilde{\mathrm{Y}}_{b,\ell} \;=\; \arg\max_{v\in[V]} \tilde{\mathbf{Z}}_{b,\ell,v}$,
and ignores positions where \(\mathbf{M}_{b,\ell}=0\).

We now return to the main theme of this section: when optimizing the dummy label logits \(\tilde{\mathbf{Z}}\), samples within the same mini-batch are \emph{batch-separable} and hence do not affect each other, except through the global normalization constant \(N\), which is known to the adversary.

The key observation comes directly from the computational graph. For each sample \(b\in[B]\), the server-side proxy Top segment and the shifted soft-target loss induce a per-sample subgraph
\begin{equation}
\begin{aligned}
x_{\mathrm{trk}}^{(b)},\,\mathbf{M}^{(b)} \;\rightarrow\; & \tilde{\mathbf{L}}^{(b)} \;=\; F\!\big(x_{\mathrm{trk}}^{(b)},\,\mathbf{M}^{(b)};\,\theta_{\mathrm{pro}}\big),\\
\tilde{\mathbf{Z}}^{(b)} \; \rightarrow\; &\mathbf{P}^{(b)} \;=\; \mathrm{Softmax}\!\big(\tilde{\mathbf{Z}}^{(b)}/\tau\big),\\
\big(\tilde{\mathbf{L}}^{(b)},\,\mathbf{P}^{(b)},\,\mathbf{M}^{(b)}\big) \;&\rightarrow\; \mathcal{L}_{\mathrm{soft}}^{(b)} \;\rightarrow\; \tilde{\mathbf{g}}_{\mathrm{trk}}^{(b)} \;=\; \nabla_{x_{\mathrm{trk}}^{(b)}} \mathcal{L}_{\mathrm{soft}}.
\end{aligned}
\end{equation}
and these subgraphs are disjoint across different \(b\)'s because modern LLM implementations execute the forward/backward pass independently per example and only stack tensors along the batch dimension for efficiency. Concretely, the masked shifted loss decomposes as
\begin{equation}
\begin{aligned}
\mathcal{L}_{\mathrm{soft}}(x_{\mathrm{trk}},\tilde{\mathbf{Z}};\theta_{\mathrm{pro}})
&=
\frac{1}{N}\sum_{b=1}^{B}\sum_{\ell=1}^{L-1}\mathbf{M}_{b,\ell+1}\,\ell_{b,\ell}\\
&=
\frac{1}{N}\sum_{b=1}^{B}\mathcal{L}_{\mathrm{soft}}^{(b)},
\end{aligned}
\end{equation}
where \(\mathcal{L}_{\mathrm{soft}}^{(b)} \triangleq \sum_{\ell=1}^{L-1}\mathbf{M}_{b,\ell+1}\,\ell_{b,\ell}\) depends only on \((\mathbf{x}_{\mathrm{trk}}^{(b)},\mathbf{M}^{(b)},\tilde{\mathbf{Z}}^{(b)})\).
As a result, the dummy gradient tensor factorizes across the batch:
\begin{equation}
\begin{aligned}
\hspace{1.5em}\tilde{\mathbf{g}}_{\mathrm{trk}}(\tilde{\mathbf{Z}})
&=
\nabla_{x_{\mathrm{trk}}}\mathcal{L}_{\mathrm{soft}}(x_{\mathrm{trk}},\tilde{\mathbf{Z}};\theta_{\mathrm{pro}})\\
=&
\frac{1}{N}\Big[
\nabla_{x_{\mathrm{trk}}^{(1)}}\mathcal{L}_{\mathrm{soft}}^{(1)},
\ldots,
\nabla_{x_{\mathrm{trk}}^{(B)}}\mathcal{L}_{\mathrm{soft}}^{(B)}
\Big],
\end{aligned}
\end{equation}
and there are no cross-sample Jacobian terms. More explicitly, for any \(b\neq b'\),
\[
\frac{\partial\, \tilde{\mathbf{g}}_{\mathrm{trk}}^{(b)}(\tilde{\mathbf{Z}})}{\partial\, \tilde{\mathbf{Z}}^{(b')}} = \mathbf{0},\;
\text{ equivalently }\;
\nabla^2_{\tilde{\mathbf{Z}}^{(b')},\,x_{\mathrm{trk}}^{(b)}}\,\mathcal{L}_{\mathrm{soft}} = \mathbf{0}.
\]
Therefore, the gradient-matching objective (e.g., \(\mathcal{L}_{\mathrm{TAG}}\)) decomposes into a sum of per-sample objectives (up to the shared scalar \(1/N\)):
\begin{equation}
\begin{aligned}
\hspace{2em}\mathcal{L}_{\mathrm{TAG}}(\tilde{\mathbf{Z}})
&=
\sum_{b=1}^{B}\mathcal{L}_{\mathrm{TAG}}^{(b)}(\tilde{\mathbf{Z}}^{(b)}; \mathbf{G}^{(b)})\\
\Rightarrow
\nabla_{\tilde{\mathbf{Z}}^{(b)}}&\mathcal{L}_{\mathrm{TAG}}(\tilde{\mathbf{Z}})
=
\nabla_{\tilde{\mathbf{Z}}^{(b)}}\mathcal{L}_{\mathrm{TAG}}^{(b)}(\tilde{\mathbf{Z}}^{(b)}; \mathbf{G}^{(b)}),
\end{aligned}
\end{equation}
meaning that optimizing \(\tilde{\mathbf{Z}}^{(b)}\) for one sample cannot change the matching loss terms of any other sample \(b'\neq b\).
The only batch-level coupling is the normalization \(N=\sum_{b=1}^{B}\sum_{\ell=1}^{L-1}\mathbf{M}_{b,\ell+1}\), which is fully known to the adversary because \(\mathbf{M}\) must be provided to the server to execute attention in the trunk/top forward pass. Consequently, GMA-SL admits \emph{batch-separable} gradient matching: the attacker may conceptually solve \(B\) independent subproblems (one per sample) within each mini-batch, without cross-sample interference.

\section{Threat Model}
\label{sec3:threat_model}
We consider a curious-but-honest adversary. Specifically, the server is assumed to \emph{faithfully} execute the split learning protocol (e.g., forward/backward computations and parameter updates) without deviating from it. Meanwhile, the server can log any legitimately available information and perform offline analysis or computation to reconstruct private sequences $x \in D_{\mathrm{pri}}$, i.e., a passive inference adversary.

\begin{figure*}[t]
    \centering
    \includegraphics[width=0.98\linewidth]{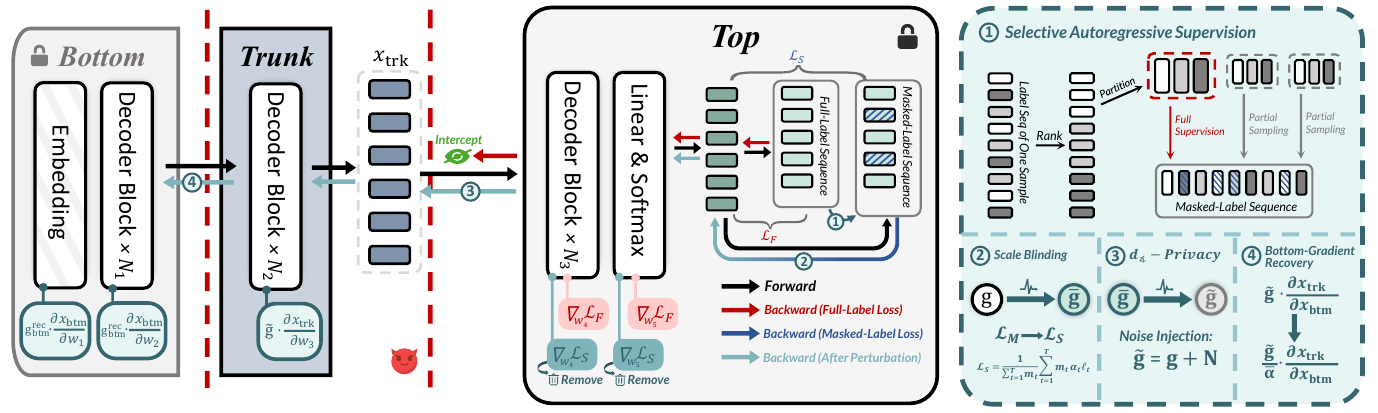}
    \caption{\textit{Illustration of Gradient Mirage against GMA-SL.} The private track (red) remains local, while the exposed track (blue-to-cyan) is protected before transmission. The numbered markers indicate the corresponding processing stages.}
    \label{fig:Defense_overview}
\end{figure*}

\noindent\textbf{Adversary's Capacities.} We endow the curious server with the following capabilities: $\boldsymbol{(i)}$ Interface observability. The curious server can monitor all cut-layer activations (i.e., clients' smashed representations) $x_{\mathrm{btm}}$ and the corresponding back-propagated gradients $\nabla x_{\mathrm{trk}}$ exchanged during the fine-tuning procedure. $\boldsymbol{(ii)}$ White-box access to the trunk. The adversary has full white-box knowledge of the server-side Trunk segment. $\boldsymbol{(iii)}$ Semi-white-box access to client-side segments, which will be explained below.
$\boldsymbol{(iv)}$ Flexible gradient-matching objective. The adversary may launch GMA-SL using any gradient-discrepancy measure as the matching objective, including cosine distance, $\ell_1$ distance, $\ell_2$ distance, or their weighted combinations (e.g., TAG).

\noindent\textbf{Semi-White-Box Access.} Following prior work on LLM-SL~\cite{SplitLLM}, we adopt a semi-white-box threat model. Concretely, the adversary does not have access to the victim’s exact per-step fine-tuned parameters or optimizer state, but is aware of the model architecture and pre-trained checkpoint. This access level—situated between the white-box and black-box extremes—is both realistic and widely adopted in LLM-SL, since pre-trained weights are often public or available to the server as a model provider. Consequently, the attacker can mount near-white-box attacks but still faces a \emph{fine-tuning gap} between the pre-trained proxy and the continually updated victim model.

\section{Gradient Mirage}
Gradient Mirage aims to induce three layers of inconsistency in GMA-SL while minimizing their impact on optimization (Fig.~\ref{fig:Defense_overview}). \emph{Objective Inconsistency} arises because the adversary assumes a full-label loss, whereas the exposed gradient is generated from a masked loss through Selective Autoregressive Supervision. \emph{Directional Inconsistency} remains because Directional Privatization randomizes the exposed gradient direction using the von Mises--Fisher mechanism. \emph{Scale Inconsistency} is introduced by Scale Blinding, making the gradient's natural magnitude unreliable even when its direction is approximately matched.
To further improve optimization compatibility, Gradient Mirage employs Dual-Track Backpropagation to preserve full-label learning at the Top segment and Bottom-Gradient Recovery to restore the effective gradient used for bottom optimization.

\subsection{Dual-Track Backpropagation}

In SL for autoregressive language modeling, the Top segment has access to full target labels, whereas lower segments receive only split-interface gradient, which may be exploited by GMA. This asymmetry suggests that full supervision should be confined to local learning within the Top segment, rather than directly reflected in the gradient exposed at the split interface.

To this end, we introduce two backward tracks with distinct roles: a \emph{Private Learning Track} and an \emph{Exposed Optimization Track}. For an input sequence, let $x_{\mathrm{trk}}$ denote the hidden representation sent from the trunk to the Top segment. We define a standard full-label autoregressive loss $\mathcal{L}_{F}$ over all target tokens and a masked loss $\mathcal{L}_{M}$ over a selected subset of tokens.
After a single forward pass produces logits $Z_{pre}$, the two tracks perform separate backward passes. In the Private Learning Track, $\mathcal{L}_{F}$ is used solely to optimize the Top-segment parameters, while its gradient with respect to $x_{\mathrm{trk}}$ is discarded. In the Exposed Optimization Track, $\mathcal{L}_{M}$ produces the interface gradient transmitted to the lower segments. This dual-track design preserves full-label learning at the Top segment while exposing only a controlled optimization signal across the split interface.

\subsection{Scale Blinding}
Building on the masked loss $\mathcal{L}_M$ defined above, we introduce
Scale Blinding to induce scale inconsistency by randomizing the
magnitudes of token-level gradient contributions. Specifically, we
replace the default unit weight of each supervised token with an
independently sampled coefficient $\alpha_t>0$, yielding the reweighted
masked loss
\begin{equation}
\mathcal{L}_{S}
=
\frac{1}{\sum_{t=1}^{T}m_t}
\sum_{t=1}^{T}m_t\alpha_t\ell_t,
\
\bar{\mathbf{g}}
=
\nabla_{x_{\mathrm{trk}}}\mathcal{L}_{S}.
\end{equation}
The released gradient therefore no longer reflects the natural token-wise scale induced by $\mathcal{L}_M$, hindering gradient matching and reconstruction. We parameterize the blinding distribution by its mean scale and relative token-wise variation, and select $\operatorname{Unif}[1500,2000]$ from a broad effective range; sensitivity analyses and scale-aware adaptive-attack results are provided in Appendix D.
This randomized reweighting avoids deterministic scale patterns, incurs negligible computational overhead, and preserves supervision at all selected positions while disrupting magnitude-sensitive matching objectives such as $\ell_1$, $\ell_2$, and their weighted combination.

\subsection{Selective Autoregressive Supervision}
In LLM-SL, the exposed split-interface gradient depends critically on the token positions included in the loss. To this end, we propose \textbf{Selective Autoregressive Supervision (SAS)}, summarized in Algorithm~\ref{alg:sas}, which supervises only a structured subset of token positions in each exposed backward pass. SAS preserves a useful optimization signal at the Top segment while inducing a mismatch between the true training gradient and the gradient assumed by an attacker.

For each sample, we construct the autoregressive target sequence by left-shifting the input label sequence, yielding $\mathbf{y} = (y_1, y_2, \ldots, y_T)$, where \(T\) is the number of supervised target tokens. During the first forward pass, the Top segment produces token-wise predictive distributions \(\{p_t\}_{t=1}^{T}\), from which we compute a token-level difficulty score based on predictive entropy $H_t = - \sum_{v \in \mathcal{V}} p_t(v)\log p_t(v)$, where \(\mathcal{V}\) denotes the vocabulary. Intuitively, a larger entropy means greater uncertainty about token \(y_t\), indicating that it is harder under the current model state.

We sort the sample’s token positions by entropy and partition them into three strata: \textbf{high-entropy}, \textbf{medium-entropy}, and \textbf{low-entropy}. Each stratum is then randomly and evenly divided into \(k\) disjoint subgroups, yielding \(3k\) entropy-aware subgroups in total, where \(k\) is a user-specified hyperparameter. We next construct \(k\) token groups by pairing one subgroup from each entropy stratum, so that each group contains a balanced mix of high-, medium-, and low-entropy tokens. Denoting these token groups by $\mathcal{G}_1, \mathcal{G}_2, \ldots, \mathcal{G}_k,$
we have
$
\mathcal{G}_i \cap \mathcal{G}_j = \emptyset \quad (i \neq j), \
\bigcup_{i=1}^{k} \mathcal{G}_i = \{1,2,\ldots,T\}.
$
Hence, the $k$ token groups are disjoint and jointly cover the entire label sequence.

At each training step, one token group is selected as the \textbf{full-supervision group}, and all its token positions are included in the masked loss. For the remaining $k-1$ token groups, we randomly sample a fixed proportion of token positions and include only the sampled subset in the loss computation. Thus, the exposed loss is neither standard full-label supervision nor naive random masking, but a structured, entropy-aware supervision pattern that varies across steps while maintaining coverage over the full label sequence across training.

\newcommand{\bluecomment}[1]{\textcolor{blue}{#1}}
\SetCommentSty{bluecomment}
\begin{algorithm}[t]
\caption{Selective Autoregressive Supervision}
\label{alg:sas}
\KwIn{Label sequence $\mathbf y=(y_1,\dots,y_T)$; predictive distributions $\{p_t\}_{t=1}^T$ from the first forward pass; group number $k$; sampling ratio $\rho$}
\KwOut{Supervision mask $\mathbf m=(m_1,\dots,m_T)$}

Construct autoregressive targets by left-shifting $\mathbf y$\;
\For{$t=1$ \KwTo $T$}{
    \tcp{\textit{Compute token entropy}}
    $H_t \leftarrow -\sum_{v\in\mathcal V} p_t(v)\log p_t(v)$\;
}
Sort token positions $\{1,\dots,T\}$ by $H_t$ in descending order\;
Partition the sorted positions into three strata:
$\mathcal H^{\mathrm{high}}, \mathcal H^{\mathrm{mid}}, \mathcal H^{\mathrm{low}}$\;
Randomly and evenly split each stratum into $k$ disjoint subgroups:
$\{\mathcal H^{\mathrm{high}}_i\}_{i=1}^k$,
$\{\mathcal H^{\mathrm{mid}}_i\}_{i=1}^k$,
$\{\mathcal H^{\mathrm{low}}_i\}_{i=1}^k$\;
\For{$i=1$ \KwTo $k$}{
    Form token group
    $
    \mathcal G_i \leftarrow
    \mathcal H^{\mathrm{high}}_i
    \cup
    \mathcal H^{\mathrm{mid}}_i
    \cup
    \mathcal H^{\mathrm{low}}_i
    $
\;}
Select one token group $\mathcal G_r$ as the full-supervision group\;
\tcp{\textit{Full supervision}}
Initialize $m_t \leftarrow 0$ for all $t=1,\dots,T$\;
\ForEach{$t\in \mathcal G_r$}{
    $m_t \leftarrow 1$
}
\For{$j=1$ \KwTo $k$}{
    \If{$j\neq r$}{
        Randomly sample $\mathcal S_j \subseteq \mathcal G_j$ with total sampling ratio $\rho$\;
        \tcp{\textit{Partial supervision}}
        \ForEach{$t\in \mathcal S_j$}{
            $m_t \leftarrow 1$
        }
    }
}
\Return{$\mathbf m$}\;
\end{algorithm}

Formally, let $m_t \in \{0,1\}$ denote the supervision mask for token position $t$. For a selected full-supervision group $\mathcal{G}_r$, we set $m_t = 1, \ \forall t \in \mathcal{G}_r$.
For each remaining group $\mathcal{G}_j$ with $j \neq r$, we independently sample a subset $\mathcal{S}_j \subseteq \mathcal{G}_j$ according to a total token sampling ratio $\rho \in [0,1]$, and set
$m_t = 1, \ \forall t \in \mathcal{S}_j, \ m_t = 0, \ \forall t \in \mathcal{G}_j \setminus \mathcal{S}_j.$
The masked autoregressive loss is then defined as
$
\mathcal{L}_M
=
\frac{1}{\sum_{t=1}^{T} m_t}
\sum_{t=1}^{T}
m_t \,\ell_t,\ \ell_t = -\log p_t(y_t).$

The design of SAS serves two complementary purposes. First, by ensuring that one token group is fully supervised at each step, the returned gradient still carries a sufficiently strong and coherent optimization signal. Second, by masking the remaining groups through entropy-aware partial sampling, the actual supervision pattern deviates from the attacker’s default assumption that the observed interface gradient corresponds to a standard full-label autoregressive loss over the complete target sequence.

The hyperparameter $k$ controls the granularity of the entropy-aware supervision and can be chosen according to the training schedule. Notably, $k$ is not required to equal the total number of training epochs. For example, even when fine-tuning lasts for 
$E=6$ epochs, one may set $k=3$, use the resulting three token groups for 3 epochs, and then recompute the token-wise predictive distributions and entropy-based grouping under the updated model state for the remaining 3 epochs.  In our LLM fine-tuning experiments, we use $E=3$ epochs and accordingly set $k=3$.

\subsection{Directional Privatization}
To introduce directional inconsistency, we adopt the vMF mechanism for directional perturbation. Rather than adding isotropic noise to the gradient, we perturb only its direction while preserving its magnitude, thereby obscuring direction while retaining optimization stability.

Given a nonzero interface gradient $\mathbf{g}\in\mathbb{R}^{d}$, let
$ \mu=\frac{\mathbf{g}}{\|\mathbf{g}\|_{2}}\in \mathbb{S}^{d-1}.$
We then sample a perturbed direction
$\tilde{\mathbf{\mu}}\sim \operatorname{VMF}(\mathbf{\mu},\kappa),$
and release
$\tilde{\mathbf{g}}=\|\mathbf{g}\|_{2}\tilde{\mathbf{\mu}},$ where the concentration parameter $\kappa\geq 0$ controls the
perturbation strength. Larger $\kappa$ concentrates
$\tilde{\mu}$ around $\mu$, whereas
$\kappa=0$ yields the uniform distribution on the hypersphere. The vMF mechanism satisfies Directional Metric Differential Privacy defined in Definition~\ref{def:1} under
the angular distance
$d_{\angle}(\mathbf{u},\mathbf{v})
=\arccos(\mathbf{u}^{\top}\mathbf{v})$. Here, the concentration parameter $\kappa$ determines the corresponding privacy budget $\varepsilon$.
\begin{definition}[\textbf{Directional Metric Differential Privacy}]
\label{def:1}
Let \((\mathbb{S}^{d-1}, d)\) be the unit hypersphere equipped with angular metric \(d_{\angle}\), and let \(\varepsilon > 0\). A randomized mechanism \(\mathcal{M}: \mathbb{S}^{d-1} \rightarrow \mathcal{P}(\mathbb{S}^{d-1})\) satisfies \(\varepsilon d\)-metric differential privacy if, for any two input directions \(\mathbf{u},\mathbf{v} \in \mathbb{S}^{d-1}\) and any measurable output set \(S \subseteq \mathbb{S}^{d-1}\), it holds that
\[\Pr[\mathcal{M}(\mathbf{u}) \in S]\leq
\exp\left(\varepsilon d_{\angle}(\mathbf{u},\mathbf{v})\right)
\Pr[\mathcal{M}(\mathbf{v}) \in S].\]
\end{definition}

Compared with additive-noise DP mechanisms~\cite{dp}, the vMF mechanism avoids the need for a user-specified clipping threshold $C$, which is often difficult to estimate and tune in practice. 
The vMF mechanism also preserves the gradient magnitude, which uniformly bounds amplitude distortion. 
Specifically, for $\mathbf{n}=\tilde{\mathbf{g}}-\mathbf{g}$, we have $\|\mathbf{g}\|_2/\|\mathbf{n}\|_2 \ge 1/2$ for every realization, with equality when $\tilde{\boldsymbol{\mu}}=-\boldsymbol{\mu}$.
Hence, it provides directional privacy while preserving sufficient optimization signal for stable training.

\begin{theorem}[\textbf{Lower Bound on the Amplitude Signal-to-Noise Ratio}]
\label{theo:2}
Let \(\mathbf{g}\in\mathbb{R}^{d}\) be a nonzero gradient and let
$\mathbf{\mu}
=
\frac{\mathbf{g}}{\|\mathbf{g}\|_2}.$
Suppose that the vMF mechanism samples a perturbed direction \(\tilde{\mathbf{\mu}}\in\mathbb{S}^{d-1}\) and releases $\tilde{\mathbf{g}}=\|\mathbf{g}\|_2\tilde{\mathbf{\mu}}.$

Define the perturbation noise as
$\mathbf{n}=\tilde{\mathbf{g}}-\mathbf{g},$
and amplitude signal-to-noise ratio as
$\mathrm{ASNR}=\frac{\|\mathbf{g}\|_2}{\|\mathbf{n}\|_2}.$
For any realization of \(\tilde{\mathbf{\mu}}\), the amplitude signal-to-noise ratio satisfies
$\mathrm{ASNR}\geq\frac{1}{2},$ where the case \(\tilde{\mathbf{\mu}}=\mathbf{\mu}\) is interpreted as \(\mathrm{ASNR}=+\infty\). The lower bound is tight and is attained when \(\tilde{\mathbf{\mu}}=-\mathbf{\mu}\).
\end{theorem}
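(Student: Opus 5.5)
The plan is to compute $\|\mathbf{n}\|_2$ exactly in terms of the angle between $\tilde{\boldsymbol{\mu}}$ and $\boldsymbol{\mu}$, and then bound it with the triangle inequality (equivalently, with $|\cos|\le 1$).

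First factor out the magnitude. Write $r=\|\mathbf{g}\|_2>0$. Then
\[
\mathbf{n}=\tilde{\mathbf{g}}-\mathbf{g}=r(\tilde{\boldsymbol{\mu}}-\boldsymbol{\mu}),
\qquad\text{so}\qquad
\mathrm{ASNR}=\frac{1}{\|\tilde{\boldsymbol{\mu}}-\boldsymbol{\mu}\|_2}.
\]
Thus the ratio does not depend on $r$ at all, and the claim reduces to a statement about two points on $\mathbb{S}^{d-1}$.

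Next expand the squared norm using $\|\tilde{\boldsymbol{\mu}}\|_2=\|\boldsymbol{\mu}\|_2=1$:
\[
\|\tilde{\boldsymbol{\mu}}-\boldsymbol{\mu}\|_2^2=2-2\boldsymbol{\mu}^{\top}\tilde{\boldsymbol{\mu}}=2(1-\cos\theta),
\]
where $\theta=d_{\angle}(\boldsymbol{\mu},\tilde{\boldsymbol{\mu}})\in[0,\pi]$. By Cauchy--Schwarz, $\boldsymbol{\mu}^{\top}\tilde{\boldsymbol{\mu}}\ge -1$. Hence $\|\tilde{\boldsymbol{\mu}}-\boldsymbol{\mu}\|_2^2\le 4$, i.e. $\|\mathbf{n}\|_2\le 2r$, which gives $\mathrm{ASNR}\ge 1/2$. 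The same bound follows directly from the triangle inequality $\|\tilde{\mathbf{g}}-\mathbf{g}\|_2\le\|\tilde{\mathbf{g}}\|_2+\|\mathbf{g}\|_2=2r$. Because this holds pointwise for every realization, the distribution of $\tilde{\boldsymbol{\mu}}$ (vMF or any other) plays no role.

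Two edge cases remain.
\begin{itemize}
\item When $\tilde{\boldsymbol{\mu}}=\boldsymbol{\mu}$, we get $\mathbf{n}=\mathbf{0}$, and the ratio is $+\infty$ by the stated convention, so the bound holds trivially.
\item For tightness, equality in Cauchy--Schwarz, $\boldsymbol{\mu}^{\top}\tilde{\boldsymbol{\mu}}=-1$, holds exactly when $\tilde{\boldsymbol{\mu}}=-\boldsymbol{\mu}$. Then $\mathbf{n}=-2\mathbf{g}$ and $\mathrm{ASNR}=1/2$. For $\kappa<\infty$ the vMF density is positive on the whole sphere, so this direction lies in the support and the bound is attained as a realization (though with probability zero).
\end{itemize}

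There is no real obstacle in this argument. The only points needing care are the $\mathbf{n}=\mathbf{0}$ convention and the observation that $-\boldsymbol{\mu}$ is in the support of the mechanism, which justifies calling the bound attained.
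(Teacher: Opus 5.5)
Your proof is correct and follows essentially the same route as the paper: you factor out $\|\mathbf{g}\|_2$ so that $\|\mathbf{n}\|_2=\|\mathbf{g}\|_2\|\tilde{\boldsymbol{\mu}}-\boldsymbol{\mu}\|_2$, bound $\|\tilde{\boldsymbol{\mu}}-\boldsymbol{\mu}\|_2\le 2$ by the triangle inequality (your $2-2\cos\theta$ expansion is an equivalent way to get this bound and its equality case), and conclude that equality holds iff $\tilde{\boldsymbol{\mu}}=-\boldsymbol{\mu}$. Your extra remarks, on the $\mathbf{n}=\mathbf{0}$ convention and on $-\boldsymbol{\mu}$ lying in the support of the vMF density for finite $\kappa$, are small refinements that the paper's proof leaves implicit.
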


\subsection{Bottom-Gradient Recovery}
Scale Blinding deliberately amplifies token-level gradient contributions, which may distort the gradient scale used for Bottom optimization. We therefore normalize the gradient reaching the Bottom segment by the expected blinding factor. Let
$
\bar{\alpha}
=
\mathbb{E}[\alpha_t],
$
and let $\tilde{\mathbf{g}}$ denote the protected interface gradient. For
$x_{\mathrm{trk}}=f(x_{\mathrm{btm}})$, the gradient propagated through the Trunk is
$\tilde{\mathbf{g}}_{\mathrm{btm}}=J_f(x_{\mathrm{btm}})^\top\tilde{\mathbf{g}}.$
Before updating the Bottom segment, we recover its effective scale as
$
\mathbf{g}_{\mathrm{btm}}^{\mathrm{rec}}=\frac{1}{\bar{\alpha}}
\tilde{\mathbf{g}}_{\mathrm{btm}}.
$

Since $\bar{\alpha}$ is the expected token weight, this normalization removes the average scale amplification introduced by Scale Blinding while retaining its token-wise randomness.

\section{Experiments}
\label{sec:5}

\subsection{Experimental Configuration}
\noindent\textbf{Models and Datasets.\footnote{We present representative results and key analyses in this section, with more comprehensive experiments and detailed analyses deferred to Appendix~\ref{sec:c5}.}}
We evaluate GMA-SL and its defenses on three LLMs:
Llama-2-7B~\cite{llama2}, Llama-3-8B~\cite{llama3}, and
DeepSeek-LLM-7B~\cite{deepseekllm}, using CodeAlpaca~\cite{codealpaca},
GSM8K~\cite{gsm8k}, and PIQA~\cite{piqa}. All models and attack
tensors use bfloat16. We assign six decoder layers to the Bottom
segment and five to the Top segment, with the remaining layers hosted
in the Trunk. 

\noindent\textbf{Evaluation Metrics.} We evaluate the data reconstruction performance of GMA-SL using the following metrics: $\boldsymbol{(i)}$ ROUGE-based F1 scores~\cite{rouge}, primarily ROUGE-L F1; $\boldsymbol{(ii)}$ Meteor~\cite{meteor}, as a complementary semantically aware metric. We use mean perplexity (PPL) to evaluate the fine-tuning performance of the LLMs.


\noindent\textbf{Attack Setting.} We perform attacks periodically during fine-tuning. Specifically, over a total of 600 fine-tuning steps, we launch one GMA-SL attack every 50 steps and evaluate the corresponding reconstruction metrics. We adopt BiSR(b)~\cite{SplitLLM}, a state-of-the-art GMA-SL paradigm, with the optimization objective:
\begin{equation}
\begin{aligned}
L(\tilde{y}) =\;& \beta \left\|\nabla_{x_{\mathrm{trk}}}\mathcal{L}(x_{\mathrm{trk}}, \tilde{y})
      - \nabla_{x_{\mathrm{trk}}}\mathcal{L}(x_{\mathrm{trk}}, y)\right\|_2 \\
+ (1-&\beta)\left\|\nabla_{x_{\mathrm{trk}}}\mathcal{L}(x_{\mathrm{trk}}, \tilde{y})
      - \nabla_{x_{\mathrm{trk}}}\mathcal{L}(x_{\mathrm{trk}}, y)\right\|_1.
\end{aligned}
\end{equation}
In our experiments, \(\beta\) is set to \(0.65\). In the ablation study, we also use cosine distance as the matching objective.

\subsection{Existing Privacy-Preserving Defenses}
\noindent\textbf{Gradient Pruning.} An intuitive defense against GMA-SL is Gradient Pruning (GP)~\cite{gradprune}, which forces the adversary to match against an incomplete gradient. 
Fig.~\ref{fig:gp}  presents the effect of GP against GMA-SL under different pruning rates (PRs), evaluated on Llama-3-8B. It can be seen that GP does not effectively defend against GMA-SL, and training becomes unstable when PR exceeds 90\%.

\noindent\textbf{Gradient Dropout.} Gradient Dropout (GD)~\cite{gradient_drop} transforms the shared gradient by randomly retaining a subset of coordinates and perturbing the rest.
Each gradient component is kept with probability \(p\) and rescaled by \(1/p\); otherwise, it is replaced with Gaussian noise sampled from \(\mathcal{N}(0,\sigma^2)\). The key idea is to preserve partial training signal while disrupting coordinate-wise gradient information, thereby hindering GMAs. Fig.~\ref{fig:gd} presents the effect of GD against GMA-SL under different drop rates and Gaussian noise standard deviations, evaluated on Llama-3-8B.

\begin{table*}[t]
\centering
\setlength{\tabcolsep}{1mm}

\begin{tabular}{c|c|ccc|ccc|ccc}
\toprule
\multicolumn{2}{c|}{Dataset}                                                                                           & \multicolumn{3}{c|}{CodeAlpaca} & \multicolumn{3}{c|}{PIQA}           & \multicolumn{3}{c}{GSM8K}           \\ \midrule
\multicolumn{1}{c|}{Model}       & Method     & RougeL-F  & Meteor  & PPL       & RougeL-F & Meteor   & PPL           & RougeL-F & Meteor   & PPL           \\ 
\midrule
\multicolumn{1}{c|}{\multirow{8}{*}{\begin{tabular}[c]{@{}c@{}}Llama3\\ -8B\end{tabular}}}       & GP                  & 1.00{\scriptsize $\pm$.00}  & .99{\scriptsize $\pm$.00} & 5.29{\scriptsize $\pm$.54}  & .96{\scriptsize $\pm$.01}  & .94{\scriptsize $\pm$.02}  & 11.50{\scriptsize $\pm$2.16}    & 1.00{\scriptsize $\pm$.00} & 1.00{\scriptsize $\pm$.01} & 12.23{\scriptsize $\pm$1.13}    \\
\multicolumn{1}{c|}{}                                                                            & GD                  & .94{\scriptsize $\pm$.01}   & .94{\scriptsize $\pm$.01} & 4.78{\scriptsize $\pm$.10}  & .95{\scriptsize $\pm$.01}  & .91{\scriptsize $\pm$.00}  & 7.76{\scriptsize $\pm$.03}      & .95{\scriptsize $\pm$.01}  & .94{\scriptsize $\pm$.01}  & 12.48{\scriptsize $\pm$.97}     \\
\multicolumn{1}{c|}{}   & LDP{\scriptsize (10240)}          & \underline{.56}{\scriptsize $\pm$.03}   & \underline{.57}{\scriptsize $\pm$.02} & 4.67{\scriptsize $\pm$.00}  & \underline{.80}{\scriptsize $\pm$.03}  & \underline{.76}{\scriptsize $\pm$.05}  & 7.74{\scriptsize $\pm$.06}      & .90{\scriptsize $\pm$.01}  & .92{\scriptsize $\pm$.01}  & 11.04{\scriptsize $\pm$.03}     \\
\multicolumn{1}{c|}{}                                                                            & LDP{\scriptsize (12288)}      & .66{\scriptsize $\pm$.03}   & .66{\scriptsize $\pm$.04} & 4.61{\scriptsize $\pm$.04}  & .81{\scriptsize $\pm$.03}  & .77{\scriptsize $\pm$.05}  & \underline{7.20}{\scriptsize $\pm$.02}      & \underline{.89}{\scriptsize $\pm$.02}  & \underline{.89}{\scriptsize $\pm$.02}  & 11.12{\scriptsize $\pm$.04}     \\
\multicolumn{1}{c|}{}     & Top-Only            & -         & -       & 4.83{\scriptsize $\pm$.01}  & -        & -        & 8.87{\scriptsize $\pm$.02}      & -        & -        & 12.00{\scriptsize $\pm$.01}     \\
\multicolumn{1}{c|}{}    & Standard            & 1.00{\scriptsize $\pm$.00}  & .99{\scriptsize $\pm$.00} & \underline{4.44}{\scriptsize $\pm$.00}  & .99{\scriptsize $\pm$.00}  & .97{\scriptsize $\pm$.01}  & 7.40{\scriptsize $\pm$.01}      & 1.00{\scriptsize $\pm$.00} & 1.00{\scriptsize $\pm$.00} & \underline{10.59}{\scriptsize $\pm$.00}     \\

\cmidrule{2-11}

\multicolumn{1}{c|}{}     & \textbf{Ours{\scriptsize (512)}}  & \textbf{.20}{\scriptsize $\pm$.00}   & \textbf{.23}{\scriptsize $\pm$.00} & 4.45{\scriptsize $\pm$.00}  & \textbf{.21}{\scriptsize $\pm$.02}  & \textbf{.14}{\scriptsize $\pm$.01}  & 7.40{\scriptsize $\pm$.02}      & \textbf{.11}{\scriptsize $\pm$.01}  & \textbf{.07}{\scriptsize $\pm$.01}  & 10.64{\scriptsize $\pm$.03}     \\
\multicolumn{1}{c|}{}       & \textbf{Ours{\scriptsize (1024)}} & \textbf{.35}{\scriptsize $\pm$.02}   & \textbf{.34}{\scriptsize $\pm$.02} & \textbf{4.42}{\scriptsize $\pm$.01}  & \textbf{.45}{\scriptsize $\pm$.02}  & \textbf{.31}{\scriptsize $\pm$.03}  & \textbf{7.19}{\scriptsize $\pm$.03}      & \textbf{.36}{\scriptsize $\pm$.01}  & \textbf{.21}{\scriptsize $\pm$.01}  & \textbf{10.50}{\scriptsize $\pm$.01}     \\ 
\midrule

\multicolumn{1}{c|}{\multirow{8}{*}{\begin{tabular}[c]{@{}c@{}}Llama2\\ -7B\end{tabular}}}       & GP                  & .99{\scriptsize $\pm$.00}   & .99{\scriptsize $\pm$.01} & 5.00{\scriptsize $\pm$.09}  & 1.00{\scriptsize $\pm$.00} & 1.00{\scriptsize $\pm$.01} & 6.50{\scriptsize $\pm$.62}      & .98{\scriptsize $\pm$.01}  & .98{\scriptsize $\pm$.00}  & 10.66{\scriptsize $\pm$.23}     \\
\multicolumn{1}{c|}{}                                                                            & GD                  & .90{\scriptsize $\pm$.02}   & .89{\scriptsize $\pm$.02} & 6.98{\scriptsize $\pm$1.58} & .81{\scriptsize $\pm$.02}  & .80{\scriptsize $\pm$.02}  & 7.76{\scriptsize $\pm$1.77}     & .90{\scriptsize $\pm$.01}  & .92{\scriptsize $\pm$.00}  & 107.08{\scriptsize $\pm$163.00} \\
\multicolumn{1}{c|}{}                                                                            & LDP{\scriptsize (10240)}     & \underline{.78}{\scriptsize $\pm$.03}   & \underline{.74}{\scriptsize $\pm$.04} & 4.80{\scriptsize $\pm$.08}  & \underline{.79}{\scriptsize $\pm$.04}  & \underline{.74}{\scriptsize $\pm$.04}  & 6.30{\scriptsize $\pm$.66}      & \underline{.38}{\scriptsize $\pm$.02}  & \underline{.33}{\scriptsize $\pm$.01}  & 10.49{\scriptsize $\pm$.15}     \\
\multicolumn{1}{c|}{}                                                                            & LDP{\scriptsize (12288)}    & .82{\scriptsize $\pm$.04}   & .80{\scriptsize $\pm$.04} & 4.85{\scriptsize $\pm$.13}  & .83{\scriptsize $\pm$.02}  & .79{\scriptsize $\pm$.02}  & 5.85{\scriptsize $\pm$.05}      & .44{\scriptsize $\pm$.03}  & .40{\scriptsize $\pm$.03}  & 10.48{\scriptsize $\pm$.14}     \\
\multicolumn{1}{c|}{}                                                                            & Top-Only            & -         & -       & 4.96{\scriptsize $\pm$.04}  & -        & -        & 6.27{\scriptsize $\pm$.06}      & -        & -        & 11.20{\scriptsize $\pm$.28}     \\
\multicolumn{1}{c|}{}       & Standard            & .99{\scriptsize $\pm$.00}   & .98{\scriptsize $\pm$.01} & \textbf{4.73}{\scriptsize $\pm$.02}  & .99{\scriptsize $\pm$.01}  & .99{\scriptsize $\pm$.01}  & \textbf{5.29}{\scriptsize $\pm$.10}      & .98{\scriptsize $\pm$.00}  & .98{\scriptsize $\pm$.00}  & \textbf{9.71}{\scriptsize $\pm$.14}      \\

\cmidrule{2-11}

\multicolumn{1}{c|}{}                                                                            & \textbf{Ours{\scriptsize (512)}}  & \textbf{.10}{\scriptsize $\pm$.01}   & \textbf{.11}{\scriptsize $\pm$.01} & 4.82{\scriptsize $\pm$.10}  & \textbf{.05}{\scriptsize $\pm$.02}  & \textbf{.05}{\scriptsize $\pm$.00}  & 5.63{\scriptsize $\pm$.13}      & \textbf{.03}{\scriptsize $\pm$.01}  & \textbf{.04}{\scriptsize $\pm$.01}  & 10.43{\scriptsize $\pm$.16}     \\
\multicolumn{1}{c|}{}                                                                            & \textbf{Ours{\scriptsize (1024)}} & \textbf{.17}{\scriptsize $\pm$.02}   & \textbf{.15}{\scriptsize $\pm$.01} & \underline{4.77}{\scriptsize $\pm$.05}  & \textbf{.10}{\scriptsize $\pm$.01}  & \textbf{.09}{\scriptsize $\pm$.01}  & \underline{5.56}{\scriptsize $\pm$.23}      & \textbf{.08}{\scriptsize $\pm$.01}  & \textbf{.07}{\scriptsize $\pm$.01}  & \underline{10.14}{\scriptsize $\pm$.17}     \\ 
\midrule
\multicolumn{1}{c|}{\multirow{8}{*}{\begin{tabular}[c]{@{}c@{}}DeepSeek\\ -LLM-7B\end{tabular}}} & GP                  & .99{\scriptsize $\pm$.00}   & .97{\scriptsize $\pm$.00} & 5.39{\scriptsize $\pm$.05}  & .94{\scriptsize $\pm$.02}  & .93{\scriptsize $\pm$.02}  & 14.39{\scriptsize $\pm$2.09}    & .79{\scriptsize $\pm$.01}  & .75{\scriptsize $\pm$.03}  & 39.11{\scriptsize $\pm$35.00}   \\
\multicolumn{1}{c|}{}                         & GD                  & .68{\scriptsize $\pm$.04}   & .68{\scriptsize $\pm$.04} & 6.33{\scriptsize $\pm$.33}  & .84{\scriptsize $\pm$.01}  & .82{\scriptsize $\pm$.03}  & 301.78{\scriptsize $\pm$415.97} & .84{\scriptsize $\pm$.02}  & .85{\scriptsize $\pm$.03}  & 16.10{\scriptsize $\pm$3.46}    \\
\multicolumn{1}{c|}{}     & LDP{\scriptsize (8192)}     & \underline{.60}{\scriptsize $\pm$.02}   & \underline{.54}{\scriptsize $\pm$.03} & 5.05{\scriptsize $\pm$.05}  & \underline{.82}{\scriptsize $\pm$.02}  & \underline{.76}{\scriptsize $\pm$.02}  & 6.71{\scriptsize $\pm$.24}      & \underline{.31}{\scriptsize $\pm$.02} & \underline{.21}{\scriptsize $\pm$.03} & 12.13{\scriptsize $\pm$.12}     \\
\multicolumn{1}{c|}{}   & LDP{\scriptsize (10240)}   & .74{\scriptsize $\pm$.02}   & .71{\scriptsize $\pm$.03} & \underline{5.00}{\scriptsize $\pm$.10}  & .85{\scriptsize $\pm$.01}  & .71{\scriptsize $\pm$.02}  & 6.72{\scriptsize $\pm$.27}      & .34{\scriptsize $\pm$.01} & .24{\scriptsize $\pm$.03} & 12.13{\scriptsize $\pm$.12}     \\
\multicolumn{1}{c|}{}              & Top-Only            & -         & -       & 5.44{\scriptsize $\pm$.12}  & -        & -        & 7.44{\scriptsize $\pm$.13}      & -        & -        & 13.72{\scriptsize $\pm$.11}     \\
\multicolumn{1}{c|}{}                                                                            & Standard            & 1.00{\scriptsize $\pm$.00}  & .98{\scriptsize $\pm$.00} & \textbf{4.38}{\scriptsize $\pm$.00}  & 1.00{\scriptsize $\pm$.00} & .97{\scriptsize $\pm$.01}  & \textbf{5.79}{\scriptsize $\pm$.01}      & .98{\scriptsize $\pm$.00}  & .98{\scriptsize $\pm$.00}  & \textbf{10.01}{\scriptsize $\pm$.00}     \\

\cmidrule{2-11}

\multicolumn{1}{c|}{}                                                                            & \textbf{Ours{\scriptsize (512)}}  & \textbf{.07}{\scriptsize $\pm$.02}   & \textbf{.05}{\scriptsize $\pm$.02} & 5.30{\scriptsize $\pm$.22}  & \textbf{.07}{\scriptsize $\pm$.01}  & \textbf{.04}{\scriptsize $\pm$.00}  & 6.67{\scriptsize $\pm$.16}      & \textbf{.03}{\scriptsize $\pm$.00}  & \textbf{.01}{\scriptsize $\pm$.00}  & \underline{11.99}{\scriptsize $\pm$.11}     \\
\multicolumn{1}{c|}{}                                                                            & \textbf{Ours{\scriptsize (1024)}} & \textbf{.11}{\scriptsize $\pm$.01}   & \textbf{.06}{\scriptsize $\pm$.01} & 5.15{\scriptsize $\pm$.14}  & .\textbf{14}{\scriptsize $\pm$.01}  & \textbf{.08}{\scriptsize $\pm$.01}  & \underline{6.45}{\scriptsize $\pm$.02}      & \textbf{.07}{\scriptsize $\pm$.01}  & \textbf{.03}{\scriptsize $\pm$.00}  & 12.04{\scriptsize $\pm$.29}     \\ 
\bottomrule
\end{tabular}
\caption{\textit{Comparison between Gradient Mirage and existing
privacy-preserving methods across three models.}}
\label{tab:big}

\end{table*}

\noindent\textbf{Gradient Sequence Local DP.} Unlike DP-SGD~\cite{dp-sgd}, the object to be released and protected in our setting is the gradient sequence $G \in \mathbb{R}^{L \times D}$. Inspired by DP-Forward~\cite{DP-Forward}, which enforces privacy in the forward pass by clipping and perturbing client-side intermediate activations before transmission, we extend \textit{Sequence Local DP (SeqLDP)}~\cite{DP-Forward} to \textit{Gradient Sequence Local DP (GradSeq-LDP)}.
\begin{definition}[\textbf{GradSeq-LDP}]
For $\varepsilon \ge 0$, $0 \le \delta \le 1$, $\mathcal{M}$ satisfies
$(\varepsilon,\delta)$-GradSeq-LDP, if $\forall G, G'\in \mathbb{R}^{L \times D}$, and any
possible output subset $\mathcal{O}$,
\[
\Pr[\mathcal{M}(G)\in \mathcal{O}]
\le
e^{\varepsilon}\Pr[\mathcal{M}(G')\in \mathcal{O}] + \delta.
\]
\end{definition}
We implement \textit{GradSeq-LDP} using the analytic Gaussian mechanism (aGM)~\cite{aGM}. \textit{GradSeq-LDP} enforces privacy protection at an earlier stage than DP-SGD. Consequently, applying such a strong privacy mechanism at this location may substantially affect training.
As shown in Tab.~\ref{tab:Dp-Forward}, a small \(\boldsymbol{\varepsilon}\) indeed provides strong defense effectiveness, but leads to highly unstable training. Fig.~\ref{fig:DP-Forward} shows that fine-tuning under small privacy budgets is highly unstable and underperforms Top-Only Training. Even when a particular run appears stable, the training can become unstable after changing the random seed, as illustrated in Fig.~\ref{fig:not_stable}.

\begin{figure}[t]
    \centering
    \includegraphics[width=0.98\linewidth]{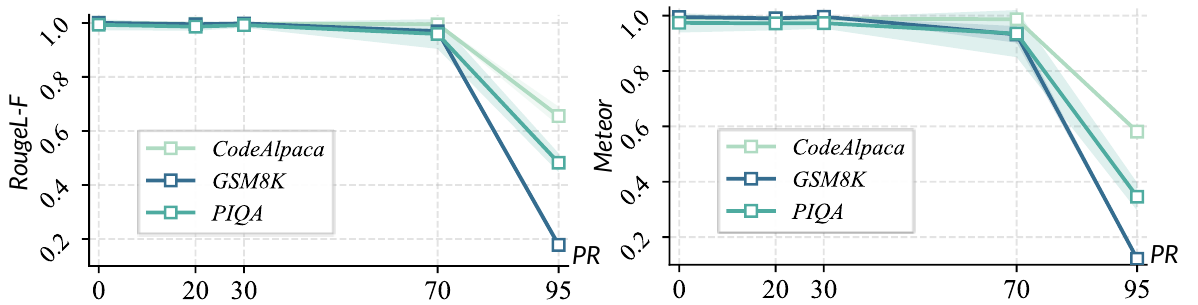}
    \caption{\textit{Effect of GP against GMA-SL under different PRs.} }
    \label{fig:gp}
\end{figure}

To quantify the perturbation imposed by GradSeq-LDP on the exposed gradients, we employ four complementary gradient utility metrics: Amplitude Signal-to-Noise Ratio (ASNR), ASNR@10\%, Recall@10\% (R@10\%), and Jaccard@10\% (J@10\%), which characterize the preservation of gradient magnitude and salient coordinates from complementary perspectives. Detailed definitions are provided in Appendix~\ref{sec:Appendix_C1}. As shown in Fig.~\ref{fig:SNR}, even with an extremely large privacy budget of $\varepsilon=12288$, the privatized gradients remain heavily dominated by injected noise, exhibiting limited preservation of the original gradient signal. Despite such severe gradient perturbation, GradSeq-LDP still provides insufficient protection against GMA-SL, highlighting its unfavorable privacy--utility trade-off in this setting.

\begin{figure}[t]
    \centering
    \includegraphics[width=0.98\linewidth]{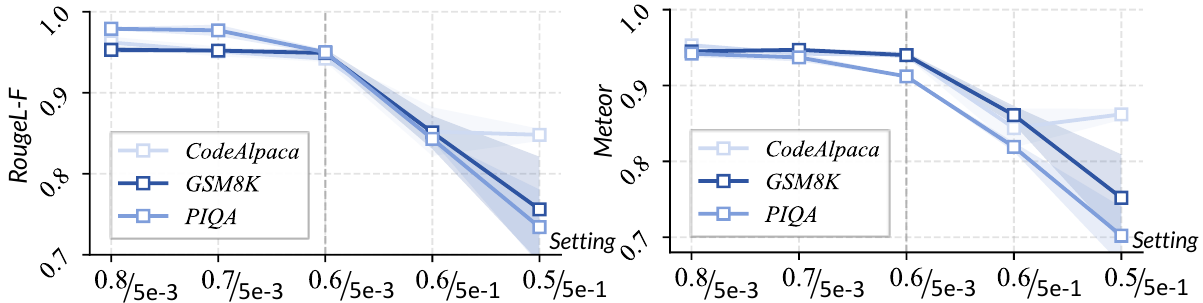}
    \caption{\textit{Effect of GD against GMA-SL under different drop rates and Gaussian noise standard deviations.} }
    \label{fig:gd}
\end{figure}

\begin{figure*}[t]
    \centering
    \includegraphics[width=0.98\linewidth]{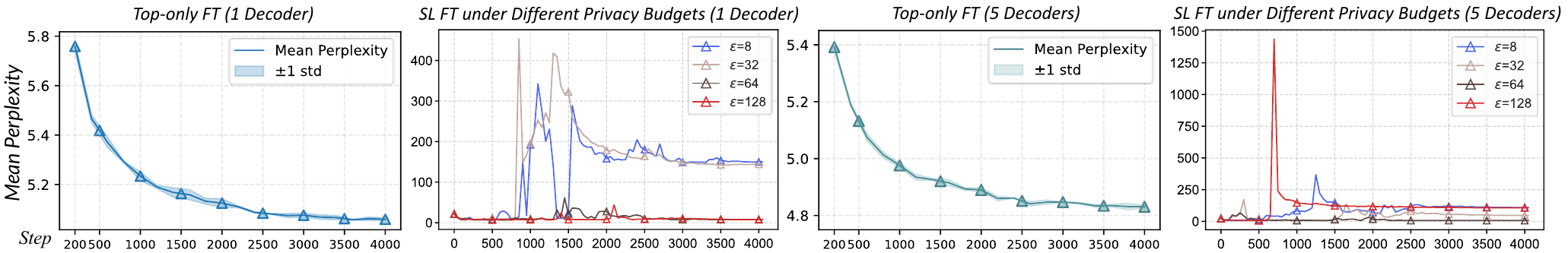}
    \caption{\textit{Comparison between Top-only FT and SL FT under GradSeq-LDP.} The left two panels correspond to 1-decoder and the right two to 5-decoder. Experiments were conducted on Llama3-8B using the CodeAlpaca dataset.}
    \label{fig:DP-Forward}
\end{figure*}

\begin{figure*}[t]
    \centering
    \includegraphics[width=0.98\linewidth]{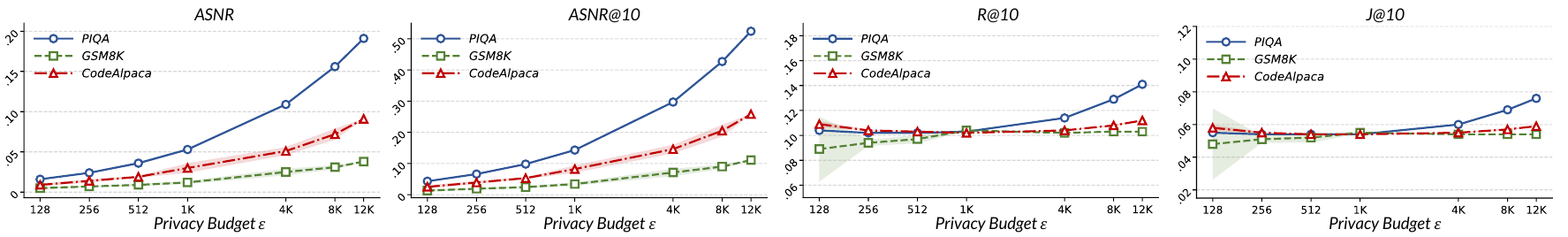}
    \caption{\textit{Gradient utility of GradSeq-LDP under different privacy budgets across three datasets.} Even at large \(\varepsilon\), the privatized gradients retain limited signal strength and salient-coordinate information.}
    \label{fig:SNR}
\end{figure*}

\begin{figure}[t]
    \centering
    \includegraphics[width=0.98\linewidth]{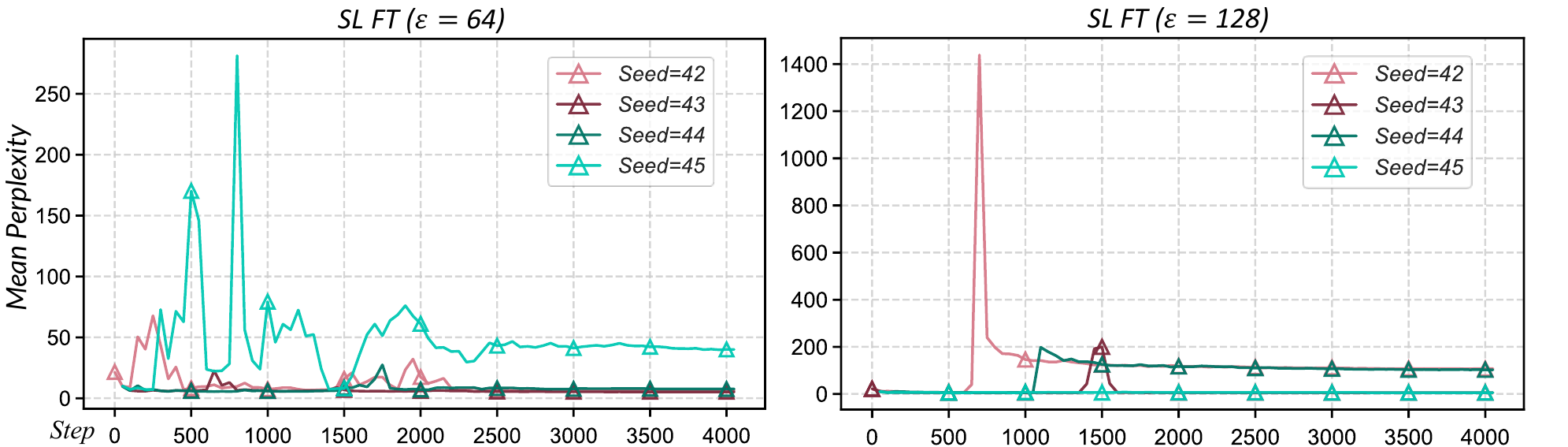}
    \caption{\textit{Seed sensitivity of SL FT across random seeds.} }
    \label{fig:not_stable}
\end{figure}

\subsection{Benchmarking Gradient Mirage}
To fairly compare defense effectiveness against BiSR(b), we tune all privacy-preserving methods to achieve comparable fine-tuning performance. Results are reported in Tab.~\ref{tab:big}. For GP, we use a PR of \(0.7\) for the Llama-series models and \(0.8\) for DeepSeek to obtain fine-tuning performance comparable to that of Gradient Mirage.  For GD, we set the retention probability to \(p=0.6\) and the Gaussian noise standard deviation to \(5\times10^{-3}\). For \textit{GradSeq-LDP}, the clipping coefficients \(C\) are set to \(0.12\), \(0.02\), and \(0.01\) for Llama-3, Llama-2, and DeepSeek, respectively, while the corresponding privacy budgets \(\varepsilon\) are specified in the tables. For Gradient Mirage, we set the SAS sampling ratio to \(\rho=0.6\) and use the directional metric privacy budget \(\varepsilon \in \{512, 1024\}\) induced by the vMF mechanism for all models. All experiments are conducted with a batch size of \(8\) and repeated using three random seeds.

\subsection{Analysis of Scale Blinding Parameterization}
\label{sec:5.4}
\noindent\textbf{Parameterization.} To disentangle the effect of the absolute scaling magnitude from token-wise randomness, we parameterize the blinding coefficient as
\begin{equation}
\alpha_t = m(1 + u_t),  u_t \sim \mathrm{Unif}[-r, r],
\end{equation}
where \(m = \mathbb{E}[\alpha_t]\) controls the mean amplitude shift and \(r\) controls the relative token-wise variation. Our default choice \(\alpha_t \sim \mathrm{Unif}[1500, 2000]\) corresponds to \(m = 1750\) and \(r = 1/7\), with a coefficient of variation of approximately \(0.082\).

\noindent\textbf{Scale-Aware Adaptive Attack.}
We additionally consider an attacker that jointly estimates an optimal nuisance scale during reconstruction. 
Specifically, the attacker optimizes both the dummy label $y_{\mathrm{dum}}$ and a learnable raw-scale parameter $s_{\mathrm{raw}}$ using the following scale-aware gradient-matching objective:
$
    \min_{y_{\mathrm{dum}},\,s_{\mathrm{raw}}}
    \left\|
        \mathbf{g}_{\mathrm{release}}
        - s\,\mathbf{g}(y_{\mathrm{dum}})
    \right\|_2^2, \ s = \operatorname{softplus}(s_{\mathrm{raw}}) + \epsilon,
$
where $\mathbf{g}_{\mathrm{release}}$ denotes the released gradient, $\mathbf{g}(y_{\mathrm{dum}})$ denotes the dummy gradient generated from $y_{\mathrm{dum}}$, and $\epsilon_{\mathrm{num}}=10^{-8}$ is a small constant introduced for numerical stability. The softplus parameterization ensures that the estimated scale remains strictly positive throughout optimization. Both $y_{\mathrm{dum}}$ and $s_{\mathrm{raw}}$ are jointly updated during reconstruction.

\noindent\textbf{Effect of the Mean Scale $m$.}
To study the impact of the mean amplitude shift, we fix the relative token-wise variation as \(r=1/7\) and evaluate different mean scales:
\(m\in\{100,500,1000,1500,1750,2500,4000\}\).
The corresponding defense effectiveness and fine-tuning performance are reported in Tab.~\ref{tab_appendix:scale_m} and Fig.~\ref{fig_Appendix:scale_m}, respectively. Tab.~\ref{tab_appendix:scale_m} reports ROUGE-L-F as the reconstruction
metric. Throughout the Scale Blinding experiments in section~\ref{sec:5.4}, the directional DP budget is fixed at $\varepsilon$ = 512.

As the mean scale increases, the reconstruction quality decreases rapidly in the low-scale regime and gradually stabilizes, while the fine-tuning utility remains largely unaffected across different settings. This indicates that increasing the mean scale effectively disrupts magnitude-sensitive gradient matching, but the privacy gain becomes less significant once the scale reaches a sufficiently large range.

Based on these observations, we choose \(m=1750\) (corresponding to \(\alpha_t\sim\mathrm{Unif}[1500,2000]\)) as our default setting. Although this value is not necessarily the globally optimal choice, it provides a favorable privacy-utility trade-off while avoiding unnecessarily large gradient scaling.

\begin{table*}[t]
\centering
\resizebox{0.86\textwidth}{!}{%
\begin{tabular}{c|ccccccc}
\toprule
\multirow{2}{*}{\begin{tabular}[c]{@{}c@{}}Matching\\ Objective\end{tabular}} & \multicolumn{7}{c}{$m$}                                                             \\ \cmidrule{2-8} 
     & 100       & 500       & 1000      & 1500      & 1750      & 2500      & 4000      \\ 
\midrule
TAG       & .382±.026 & .246±.018 & .211±.012 & .195±.016 & .202±.002 & .170±.002 & .168±.010 \\
\multicolumn{1}{l|}{Cosine Distance}        & .089±.008 & .088±.006 & .087±.008 & .090±.003 & .094±.005 & .100±.004 & .088±.007 \\
Adaptive TAG                   & .360±.024 & .248±.018 & .213±.010 & .197±.021 & .208±.016 & .187±.005 & .184±.013 \\ \bottomrule
\end{tabular}

}
\caption{\textit{Effect of the mean scale \(m\) on reconstruction quality under different matching objectives.}}
\label{tab_appendix:scale_m}

\end{table*}
\begin{table*}[t]
\centering
\resizebox{0.76\textwidth}{!}{%
\begin{tabular}{c|ccccccc}
\toprule
\multirow{2}{*}{$r$} & \multicolumn{7}{c}{$m$}                                                             \\ 
\cmidrule{2-8} 
                   & 100       & 500       & 1000      & 1500      & 1750      & 2500      & 4000      \\
\midrule
1/5                & .341±.021 & .212±.014 & .191±.012 & .187±.012 & .183±.011 & .143±.013 & .133±.014 \\
1/7                & .360±.024 & .248±.018 & .213±.010 & .197±.021 & .208±.016 & .187±.005 & .184±.013 \\
1/10               & .391±.017 & .250±.011 & .219±.013 & .199±.023 & .208±.007 & .186±.011 & .189±.009 \\
1/25               & .397±.024 & .258±.017 & .221±.023 & .237±.015 & .219±.010 & .194±.010 & .193±.010 \\
1/50               & .404±.024 & .263±.017 & .227±.016 & .232±.022 & .239±.008 & .198±.005 & .204±.009 \\ 

\bottomrule

\end{tabular}

}

\caption{\textit{Effect of the relative token-wise variation \(r\) on reconstruction quality under different mean scales \(m\).}}
\label{tab_appendix:scale_r}

\end{table*}
\begin{table}[t]
\centering
\resizebox{0.78\columnwidth}{!}{%
\begin{tabular}{c|ccc}
\toprule
\multirow{2}{*}{BiSR(b)} & \multicolumn{3}{c}{CodeAlpaca}  \\ \cmidrule{2-4} 
        & RougeL-F & Rouge2-F & Meteor  \\ \midrule
\(\boldsymbol{\varepsilon}\)=128   & .02±.01   & .00±.00   & .01±.00 \\
\(\boldsymbol{\varepsilon}\)=256   & .03±.01   & .00±.00   & .02±.01 \\
\(\boldsymbol{\varepsilon}\)=1024  & .12±.05   & .05±.03   & .10±.05 \\
\(\boldsymbol{\varepsilon}\)=8192  & .62±.07   & .48±.09   & .62±.08 \\
\(\boldsymbol{\varepsilon}\)=10240 & .56±.03   & .41±.02   & .57±.02 \\
\(\boldsymbol{\varepsilon}\)=12288 & .66±.03   & .52±.04   & .66±.04 \\ \bottomrule
\end{tabular}
}
\caption{\textit{Defense performance of \textit{GradSeq-LDP} against GMA-SL under different privacy budgets.}}
\label{tab:Dp-Forward}
\end{table}

\begin{table}[t]

\centering
\resizebox{0.9\columnwidth}{!}{%
\begin{tabular}{c|ccc}
\toprule
Setting            & RougeL-F  & Rouge2-F  & Meteor    \\ 
\midrule
w/o Scale Blinding & .71±.01 & .51±.02 & .69±.01 \\
w/ Scale Blinding  & .35±.02 & .18±.02 & .34±.02 \\ 
\bottomrule
\end{tabular}
}
\caption{\textit{Impact of Scale Blinding.}}
\label{tab:Ablation_scale}
\end{table}
\begin{table}[t]
\centering
\setlength{\tabcolsep}{1mm}
\resizebox{0.84\columnwidth}{!}{%
\begin{tabular}{l|ccc}
\toprule
Setting             & RougeL-F  & Rouge2-F  & Meteor    \\ 
\midrule
Scale Blinding-Only & .55±.11 & .25±.04 & .43±.11 \\
$\rho$=0.8               & .42±.09 & .11±.04 & .28±.09 \\
$\rho$=0.7               & .41±.06 & .11±.03 & .28±.04 \\
$\rho$=0.6               & .37±.04 & .09±.02 & .25±.03 \\
$\rho$=0.6,\(\boldsymbol{\varepsilon}\)=8192        & .29±.02 & .05±.01 & .19±.02 \\
$\rho$=0.6,\(\boldsymbol{\varepsilon}\)=1024        & .17±.01 & .01±.00 & .11±.01 \\
$\rho$=0.6,\(\boldsymbol{\varepsilon}\)=512         & .09±.01 & .00±.00 & .07±.01 \\ 
\bottomrule
\end{tabular}
}

\caption{\textit{Impact of SAS and Directional Privatization.}}
\label{tab:ablation_dir}

\end{table}

\noindent\textbf{Effect of Relative Token-wise Variation $r$.}
We further investigate the impact of the relative token-wise variation \(r\), which controls the randomness introduced into the token-level scaling coefficients. We evaluate different values of \(r\) under multiple mean scales \(m\), and the reconstruction results are summarized in Tab.~\ref{tab_appendix:scale_r}. The experiments are conducted on the CodeAlpaca dataset using the Llama-3-8B model. ROUGE-L-F is reported as the reconstruction metric.

When \(r\) is close to zero, the scaling coefficients become concentrated around a similar value, causing Scale Blinding to degenerate toward deterministic global rescaling. Such a transformation can be partially compensated by an adaptive attacker and provides limited disruption to the relative magnitude pattern among token-level gradients.

As shown in Table~\ref{tab_appendix:scale_r}, introducing token-wise variation consistently improves the effectiveness of Scale Blinding compared with small variation settings. However, the benefit does not increase monotonically with \(r\). Excessively large variation may introduce overly heterogeneous gradient contributions across tokens, which can increase optimization disturbance without providing proportional privacy gains.

Considering both reconstruction resistance and optimization stability, we choose \(r=1/7\) as the default setting. This value introduces sufficient token-wise randomness while avoiding excessive perturbation of the gradient structure.

\subsection{Ablation Study}

\noindent\textbf{Impact of SAS and Directional Privatization.}
We use cosine distance as the matching objective to isolate SAS and Directional Privatization. As shown in Tab.~\ref{tab:ablation_dir}, both SAS and Directional Privatization substantially reduce reconstruction performance by perturbing the positional and directional information of the exposed gradient, respectively. The configuration of Scale Blinding is kept fixed throughout this experiment.

\noindent\textbf{Impact of Scale Blinding.}
We evaluate Scale Blinding against BiSR(b), which adopts the magnitude-sensitive objective used by TAG. As shown in Tab.~\ref{tab:Ablation_scale}, removing Scale Blinding increases ROUGE-L F1 from 0.35 to 0.71, indicating that SAS and Directional Privatization are insufficient. The vMF mechanism uses a privacy budget of $\varepsilon=1024$, while SAS is configured with a total token ratio of $\rho=0.6$.

\noindent\textbf{Impact of Dual-Track Backpropagation.}
As shown in Tab.~\ref{tab:big_ablation}, removing Dual-Track Backpropagation consistently increases PPL across all datasets. This confirms that retaining full-label supervision in the \textit{private learning track} is important for preserving fine-tuning utility.

\begin{figure*}[t]
    \centering
    \includegraphics[width=0.84\linewidth]{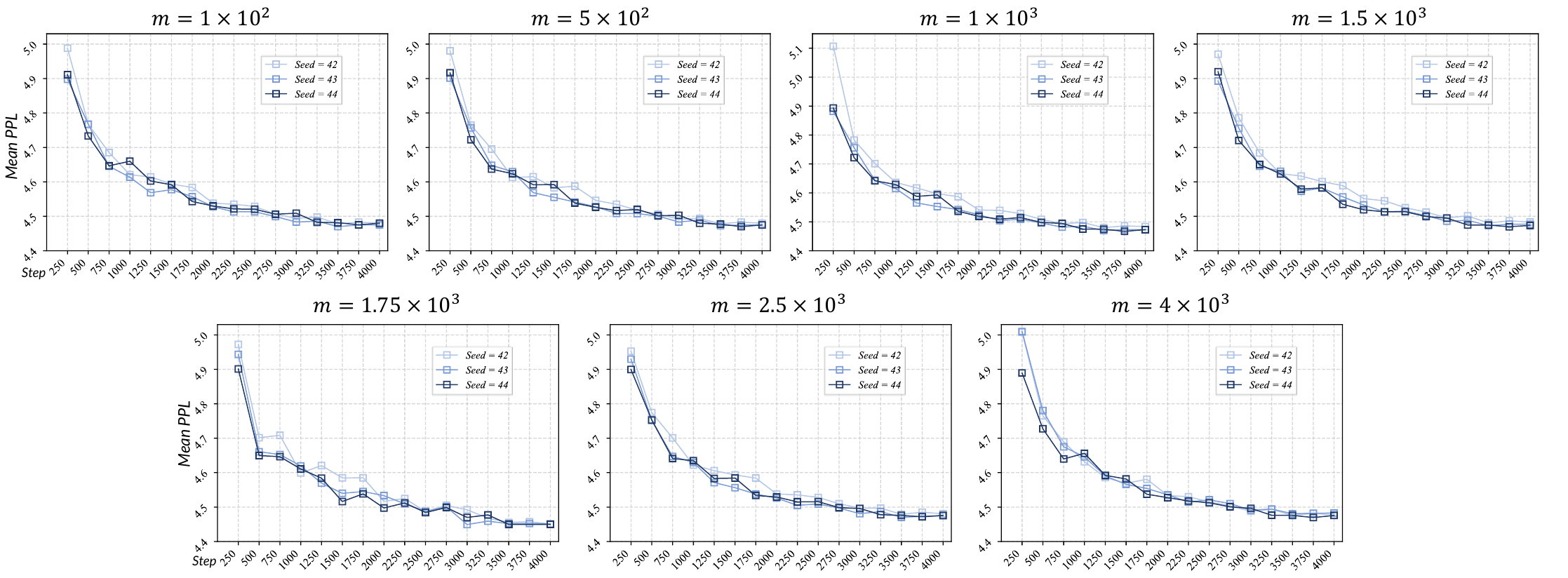}
    \caption{\textit{Fine-tuning performance under different mean scales \(m\) with fixed relative token-wise variation \(r=1/7\). The results are obtained on Llama-3-8B with the CodeAlpaca dataset.} }
    \label{fig_Appendix:scale_m}
    \vspace{-3pt}
\end{figure*}

\begin{table*}[t]
\centering

\begin{tabular}{cc|cc|cc|cc}
\toprule
\multirow{2}{*}{Dataset} & \multirow{2}{*}{\begin{tabular}[c]{@{}c@{}}Standard\\ PPL\end{tabular}} & \multicolumn{2}{c|}{Dual-Track BP ({\color{red}\ding{55}})} & \multicolumn{2}{c|}{Bottom-Gradient Rec ({\color{red}\ding{55}})} & \multicolumn{2}{c}{Trunk Training ({\color{red}\ding{55}})} \\  \cmidrule{3-8}
              &          & PPL                & $\Delta (\%)$         & PPL             & $\Delta (\%)$       & PPL         & $\Delta (\%)$         \\ \midrule
\multicolumn{1}{c|}{CodeAlpaca}     & 4.42±.01    & 4.51±.05   & {\color[HTML]{FE0000} 2.04$\uparrow$}          & 4.51±.01        & {\color[HTML]{FE0000} 2.04$\uparrow$}        & 4.48±.05           & {\color[HTML]{FE0000} 1.36$\uparrow$}          \\

\multicolumn{1}{c|}{PIQA}   & 7.19±.03     & 7.30±.02   & {\color[HTML]{FE0000} 1.53$\uparrow$}          & 7.43±.03        & {\color[HTML]{FE0000} 3.34$\uparrow$}    & 7.23±.02    & {\color[HTML]{FE0000} 0.56$\uparrow$}     \\

\multicolumn{1}{c|}{GSM8K}   & 10.50±.01  & 10.61±.05   & {\color[HTML]{FE0000} 1.05$\uparrow$}   & 10.69±.03    & {\color[HTML]{FE0000} 1.82$\uparrow$}     & 10.59±.03     & {\color[HTML]{FE0000} 0.86$\uparrow$}  \\ 

\bottomrule
\end{tabular}

\caption{\textit{Impact of Dual-Track Backpropagation, Bottom-Gradient Recovery, and Trunk Training on SL utility.}}
\label{tab:big_ablation}
\vspace{-5pt}
\end{table*}

\noindent\textbf{Impact of Bottom-Gradient Recovery.}
Disabling Bottom-Gradient Recovery likewise degrades PPL across all datasets, as the blinded gradient scale directly affects Bottom optimization. These results validate its role in preventing Scale Blinding from impairing training.

\noindent\textbf{Impact of Trunk Training.}
The gradients in the Trunk segment are subject to a certain degree of perturbation. Therefore, whether these gradients should be used to update the model is itself an important factor that warrants ablation. As shown in Tab.~\ref{tab:big_ablation}, we find that updating the trunk parameters does not compromise training stability; instead, it slightly improves training performance.

For the ablation studies of Dual-Track Backpropagation, Bottom-Gradient Recovery, and Trunk Training, the vMF mechanism uses a privacy budget of $\varepsilon=1024$, while SAS is configured with a total token ratio of $\rho=0.6$. The parameters of Scale Blinding are set to $m=1750$ and $r=1/7$.

\section{Conclusion}
We identify gradient--objective consistency as the root vulnerability enabling GMA-SL in LLM-SL, and propose Gradient Mirage, a defense that turns
gradient matching into a misspecified inverse problem. By injecting objective, directional, and scale inconsistencies into the exposed gradient while preserving full-label learning and recovering the effective bottom gradient, Gradient Mirage decouples what the model learns from what the gradient reveals. Experiments across multiple LLMs and datasets show that it substantially suppresses label reconstruction under comparable fine-tuning performance, achieving a superior privacy-utility trade-off. This work focuses on autoregressive training scenarios of large language models in split learning. Exploring gradient privacy attacks and defenses under other training paradigms and different model architectures, such as multimodal and vision models, remains a challenging direction.

\section*{Acknowledgments}

This work was partially funded by NSFC Grants No.62272222 and 62272215, Jiangsu Province Outstanding Youth Fund Project (No.BK20230080), the Nanjing University-China Mobile Communications Group Co.,Ltd. Joint Institute (NJ20250025).

\clearpage
\bibliography{aaai2026}

@inproceedings{SplitLLM,
  title={Unveiling the vulnerability of private fine-tuning in split-based frameworks for large language models: A bidirectionally enhanced attack},
  author={Chen, Guanzhong and Qin, Zhenghan and Yang, Mingxin and Zhou, Yajie and Fan, Tao and Du, Tianyu and Xu, Zenglin},
  booktitle={Proceedings of the 2024 on ACM SIGSAC Conference on Computer and Communications Security},
  pages={2904--2918},
  year={2024}
}

@inproceedings{unleashing,
  title={Unleashing the tiger: Inference attacks on split learning},
  author={Pasquini, Dario and Ateniese, Giuseppe and Bernaschi, Massimo},
  booktitle={Proceedings of the 2021 ACM SIGSAC conference on computer and communications security},
  pages={2113--2129},
  year={2021}
}

@inproceedings{DP-Forward,
  title={Dp-forward: Fine-tuning and inference on language models with differential privacy in forward pass},
  author={Du, Minxin and Yue, Xiang and Chow, Sherman SM and Wang, Tianhao and Huang, Chenyu and Sun, Huan},
  booktitle={Proceedings of the 2023 ACM SIGSAC Conference on Computer and Communications Security},
  pages={2665--2679},
  year={2023}
}

@article{SL1,
  title={Split learning for collaborative deep learning in healthcare},
  author={Poirot, Maarten G and Vepakomma, Praneeth and Chang, Ken and Kalpathy-Cramer, Jayashree and Gupta, Rajiv and Raskar, Ramesh},
  journal={arXiv preprint arXiv:1912.12115},
  year={2019}
}

@article{SL2,
  title={Distributed learning of deep neural network over multiple agents},
  author={Gupta, Otkrist and Raskar, Ramesh},
  journal={Journal of Network and Computer Applications},
  volume={116},
  pages={1--8},
  year={2018},
  publisher={Elsevier}
}

@article{SL3,
  title={Split learning for health: Distributed deep learning without sharing raw patient data},
  author={Vepakomma, Praneeth and Gupta, Otkrist and Swedish, Tristan and Raskar, Ramesh},
  journal={arXiv preprint arXiv:1812.00564},
  year={2018}
}

@inproceedings{SL4,
  title={SAP: Privacy-Preserving Fine-Tuning on Language Models with Split-and-Privatize Framework},
  author={Shen, Xicong and Liu, Yang and Liu, Yi and Wang, Peiran and Liu, Huiqi and Hong, Jue and Duan, Bing and Huang, Zirui and Mao, Yunlong and Wu, Ye and others},
  booktitle={Proceedings of the Thirty-Fourth International Joint Conference on Artificial Intelligence},
  pages={502--510},
  year={2025}
}

@InProceedings{SL5,
    author    = {Zhang, Qinbo and Shi, Yanhang and Zhang, Ziyi and Wang, Hao and Zhang, Sai Qian and Li, Jian},
    title     = {Stealing Split Learning Bottom Models by Recovering Embedding Geometry},
    booktitle = {Proceedings of the IEEE/CVF Conference on Computer Vision and Pattern Recognition (CVPR)},
    month     = {June},
    year      = {2026},
    pages     = {20660-20669}
}

@article{mlaas1,
  title={LLMaaS: Serving Large-Language Models on Trusted Serverless Computing Platforms},
  author={Cai, Zinuo and Ma, Rongbo and Fu, Yicheng and Zhang, Weishan and Ma, Ruhui and Guan, Haibing},
  journal={IEEE Transactions on Artificial Intelligence},
  volume={6},
  number={2},
  pages={405--415},
  year={2024},
  publisher={IEEE}
}

@article{mlaas2,
  title={Towards Privacy-Preserving LLM Inference via Covariant Obfuscation (Technical Report)},
  author={Lin, Yu and Zhang, Qizhi and Ruan, Wenqiang and Zhang, Daode and Hong, Jue and Wu, Ye and Xia, Hanning and Mao, Yunlong and Zhong, Sheng},
  journal={arXiv preprint arXiv:2603.01499},
  year={2026}
}

@inproceedings{dp,
  title={Calibrating noise to sensitivity in private data analysis},
  author={Dwork, Cynthia and McSherry, Frank and Nissim, Kobbi and Smith, Adam},
  booktitle={Theory of cryptography conference},
  pages={265--284},
  year={2006},
  organization={Springer}
}

@inproceedings{dp-sgd,
  title={Deep learning with differential privacy},
  author={Abadi, Martin and Chu, Andy and Goodfellow, Ian and McMahan, H Brendan and Mironov, Ilya and Talwar, Kunal and Zhang, Li},
  booktitle={Proceedings of the 2016 ACM SIGSAC conference on computer and communications security},
  pages={308--318},
  year={2016}
}

@article{dlg,
  title={Deep leakage from gradients},
  author={Zhu, Ligeng and Liu, Zhijian and Han, Song},
  journal={Advances in neural information processing systems},
  volume={32},
  year={2019}
}

@article{gradprune,
  title={Methods for pruning deep neural networks},
  author={Vadera, Sunil and Ameen, Salem},
  journal={Ieee Access},
  volume={10},
  pages={63280--63300},
  year={2022},
  publisher={IEEE}
}

@inproceedings{piqa,
  title={Piqa: Reasoning about physical commonsense in natural language},
  author={Bisk, Yonatan and Zellers, Rowan and Gao, Jianfeng and Choi, Yejin and others},
  booktitle={Proceedings of the AAAI conference on artificial intelligence},
  volume={34},
  number={05},
  pages={7432--7439},
  year={2020}
}

@misc{codealpaca,
  title={Code alpaca: An instruction-following llama model for code generation},
  author={Chaudhary, Sahil},
  year={2023}
}

@article{gsm8k,
  title={Training verifiers to solve math word problems, 2021},
  author={Cobbe, Karl and Kosaraju, Vineet and Bavarian, Mohammad and Chen, Mark and Jun, Heewoo and Kaiser, Lukasz and Plappert, Matthias and Tworek, Jerry and Hilton, Jacob and Nakano, Reiichiro and others},
  journal={URL https://arxiv. org/abs/2110.14168},
  volume={9},
  year={2021}
}

@inproceedings{aGM,
  title={Improving the gaussian mechanism for differential privacy: Analytical calibration and optimal denoising},
  author={Balle, Borja and Wang, Yu-Xiang},
  booktitle={International conference on machine learning},
  pages={394--403},
  year={2018},
  organization={PMLR}
}

@article{vonMises,
  title={{\"U}ber die “Ganzzahligkeit" der Atomgewichte und verwandete Fragen},
  author={von Mises, Richard},
  journal={Physikalische Zeitschrift},
  volume={19},
  pages={490},
  year={1918}
}

@article{fisher,
  title={Dispersion on a sphere},
  author={Fisher, Ronald Aylmer},
  journal={Proceedings of the royal society of London. Series A. Mathematical and physical sciences},
  volume={217},
  number={1130},
  pages={295--305},
  year={1953},
  publisher={The Royal Society London}
}

@inproceedings{DirDP,
  title={Differential privacy for directional data},
  author={Weggenmann, Benjamin and Kerschbaum, Florian},
  booktitle={Proceedings of the 2021 ACM SIGSAC Conference on Computer and Communications Security},
  pages={1205--1222},
  year={2021}
}

@article{gradient_drop,
  title={Safeguarding Federated Learning From Data Reconstruction Attacks via Gradient Dropout},
  author={Sotthiwat, Ekanut and Zhang, Chi and Xiao, Xiaokui and Zhen, Liangli},
  journal={TIFS},
  year={2026},
  publisher={IEEE}
}

@inproceedings{tag,
  title={Tag: Gradient attack on transformer-based language models},
  author={Deng, Jieren and Wang, Yijue and Li, Ji and Wang, Chenghong and Shang, Chao and Liu, Hang and Rajasekaran, Sanguthevar and Ding, Caiwen},
  booktitle={Findings of the association for computational linguistics: EMNLP 2021},
  pages={3600--3610},
  year={2021}
}

@inproceedings{CNNdaily,
  title={Get to the point: Summarization with pointer-generator networks},
  author={See, Abigail and Liu, Peter J and Manning, Christopher D},
  booktitle={Proceedings of the 55th Annual Meeting of the Association for Computational Linguistics (Volume 1: Long Papers)},
  pages={1073--1083},
  year={2017}
}

@inproceedings{rouge,
  title={Rouge: A package for automatic evaluation of summaries},
  author={Lin, Chin-Yew},
  booktitle={Text summarization branches out},
  pages={74--81},
  year={2004}
}

@inproceedings{meteor,
  title={METEOR: An automatic metric for MT evaluation with improved correlation with human judgments},
  author={Banerjee, Satanjeev and Lavie, Alon},
  booktitle={Proceedings of the acl workshop on intrinsic and extrinsic evaluation measures for machine translation and/or summarization},
  pages={65--72},
  year={2005}
}

@article{llama2,
  title={Llama 2: Open foundation and fine-tuned chat models},
  author={Touvron, Hugo and Martin, Louis and Stone, Kevin and Albert, Peter and Almahairi, Amjad and Babaei, Yasmine and Bashlykov, Nikolay and Batra, Soumya and Bhargava, Prajjwal and Bhosale, Shruti and others},
  journal={arXiv preprint arXiv:2307.09288},
  year={2023}
}

@article{llama3,
  title={The llama 3 herd of models},
  author={Grattafiori, Aaron and Dubey, Abhimanyu and Jauhri, Abhinav and Pandey, Abhinav and Kadian, Abhishek and Al-Dahle, Ahmad and Letman, Aiesha and Mathur, Akhil and Schelten, Alan and Vaughan, Alex and others},
  journal={arXiv preprint arXiv:2407.21783},
  year={2024}
}

@article{qwen3,
  title={Qwen3 technical report},
  author={Yang, An and Li, Anfeng and Yang, Baosong and Zhang, Beichen and Hui, Binyuan and Zheng, Bo and Yu, Bowen and Gao, Chang and Huang, Chengen and Lv, Chenxu and others},
  journal={arXiv preprint arXiv:2505.09388},
  year={2025}
}

@article{qwen2.5,
  title={Qwen2. 5 Technical Report},
  author={Yang, An and Yang, Baosong and Zhang, Beichen and Hui, Binyuan and Zheng, Bo and Yu, Bowen and Li, Chengyuan and Liu, Dayiheng and Huang, Fei and Wei, Haoran and others},
  journal={arXiv e-prints},
  pages={arXiv--2412},
  year={2024}
}

@article{deepseekllm,
  title={Deepseek llm: Scaling open-source language models with longtermism},
  author={Bi, Xiao and Chen, Deli and Chen, Guanting and Chen, Shanhuang and Dai, Damai and Deng, Chengqi and Ding, Honghui and Dong, Kai and Du, Qiushi and Fu, Zhe and others},
  journal={arXiv preprint arXiv:2401.02954},
  year={2024}
}

@article{idlg,
  title={idlg: Improved deep leakage from gradients},
  author={Zhao, Bo and Mopuri, Konda Reddy and Bilen, Hakan},
  journal={arXiv preprint arXiv:2001.02610},
  year={2020}
}

@article{lamp,
  title={Lamp: Extracting text from gradients with language model priors},
  author={Balunovic, Mislav and Dimitrov, Dimitar and Jovanovi{\'c}, Nikola and Vechev, Martin},
  journal={Advances in Neural Information Processing Systems},
  volume={35},
  pages={7641--7654},
  year={2022}
}

@article{film,
  title={Recovering private text in federated learning of language models},
  author={Gupta, Samyak and Huang, Yangsibo and Zhong, Zexuan and Gao, Tianyu and Li, Kai and Chen, Danqi},
  journal={Advances in neural information processing systems},
  volume={35},
  pages={8130--8143},
  year={2022}
}

@inproceedings{GRU,
  title={On the properties of neural machine translation: Encoder--decoder approaches},
  author={Cho, Kyunghyun and Van Merri{\"e}nboer, Bart and Bahdanau, Dzmitry and Bengio, Yoshua},
  booktitle={Proceedings of SSST-8, eighth workshop on syntax, semantics and structure in statistical translation},
  pages={103--111},
  year={2014}
}

\clearpage
\appendix


\begin{center}
    {\LARGE \textbf{SUPPLEMENTARY MATERIAL}}
\end{center}

The supplementary material provides additional details about our work. Specifically, we
provide:
\begin{itemize}
    \item More details of the vMF Mechanism.
    \item More details of Selective Autoregressive Supervision.
    \item Additional experimental details and analyses. 
    \item Visualization of the effectiveness of Gradient Mirage.
    \item Future Work.
\end{itemize}

\section{Details of the vMF Mechanism}
This section provides additional details on the vMF mechanism. Sections A.1 and A.2 describe our implementation of vMF sampling, including the tangent-normal decomposition and the rejection sampling procedure, respectively. Section A.3 provides the proof of the theoretical result stated in the main text.
\subsection{Tangent-Normal Decomposition in vMF Sampling}
The von Mises-Fisher privacy mechanism adopts the von Mises-Fisher (vMF) distribution, named after von Mises and Fisher. This distribution is defined over the unit hypersphere \(S^{n-1}\).
\begin{definition}[\textbf{The vMF Distribution}]
\label{def:vmf_dis}
The vMF distribution on $\mathbb{S}^{n-1}$ with mean direction $\mu \in \mathbb{S}^{n-1}$ and concentration parameter $\kappa \ge 0$ is defined by the following density:
\[
\mathrm{vMF}(\mu, \kappa)[\mathbf{x}]
=
C_{\mathrm{vMF}}(n,\kappa)
\cdot
\exp\left(\kappa \cdot \mu^{\top}\mathbf{x}\right),
\]
where $\mathbf{x} \in \mathbb{S}^{n-1}$ and $C_{\mathrm{vMF}}(n,\kappa)$ is the normalization constant.
\end{definition}

To sample a new direction vector  $\tilde{\mu} \in \mathbb{S}^{n-1}$ under the vMF mechanism centered at $\mu \in \mathbb{S}^{n-1}$, we use the tangent-normal decomposition. As illustrated in Fig.~\ref{fig:Appendix_VMF}, the sampled vector \( \tilde{\mu} \) is decomposed into a component along \( \mu \) and a component in the subspace orthogonal to \( \mu \). Let
$t=\tilde{\mu}^{\top}\mu=\cos\theta ,$
where \( \theta \) is the angle between \( \tilde{\mu} \) and \( \mu \). Then the orthogonal component has magnitude
$h=\sqrt{1-t^2}=\sin\theta$.
If \( \xi \sim \mathrm{Unif}(\mathbb{S}^{n-2}\perp \mu) \) is a unit vector uniformly sampled from the subsphere orthogonal to \( \mu \), the sampled direction can be written as
$\tilde{\mu}=t\mu+\sqrt{1-t^2}\,\xi$
or equivalently
$\tilde{\mu}=\cos\theta\,\mu+\sin\theta\,\xi$.
Therefore, vMF sampling is separated into two parts: sampling the axial projection \( t \) (or equivalently the angle \( \theta \)) and sampling the tangential direction \( \xi \) uniformly on the orthogonal subsphere.

\subsection{Rejection Sampling for $t$}
After the tangent-normal decomposition, vMF sampling reduces to drawing the axial variable \(t \in [-1,1]\). In our implementation, \(t\) is generated by a rejection sampling scheme with a Beta proposal. Specifically, we first sample 
\begin{equation}
    y \sim \mathrm{Beta}\!\left(\frac{p-1}{2}, \frac{p-1}{2}\right),
\end{equation}
and transform it into a candidate $t=\frac{1-(1+b_0)y}{1-(1-b_0)y},$
where $b_0=\frac{-2\kappa+\sqrt{4\kappa^2+(p-1)^2}}{p-1}.$
Following the standard construction, we further define
\begin{equation}
a=\frac{p-1+2\kappa+\sqrt{4\kappa^2+(p-1)^2}}{4},
\end{equation}
and 
\begin{equation}
d=\frac{4ab_0}{1+b_0}-(p-1)\log(p-1).
\end{equation}
For each proposed sample, let
\begin{equation}
s=\frac{2ab_0}{1-(1-b_0)y}.
\end{equation}
The candidate is accepted if
\begin{equation}
\log u \le (p-1)\log s - s + d,
\end{equation}
where \(u \sim \mathrm{Unif}(0,1)\). Otherwise, the sample is rejected and the procedure is repeated. This yields a valid sample for \(t\), which is then combined with the tangential direction to form the final vMF sample.

\begin{figure}[t]
    \centering
    \includegraphics[width=0.95\linewidth]{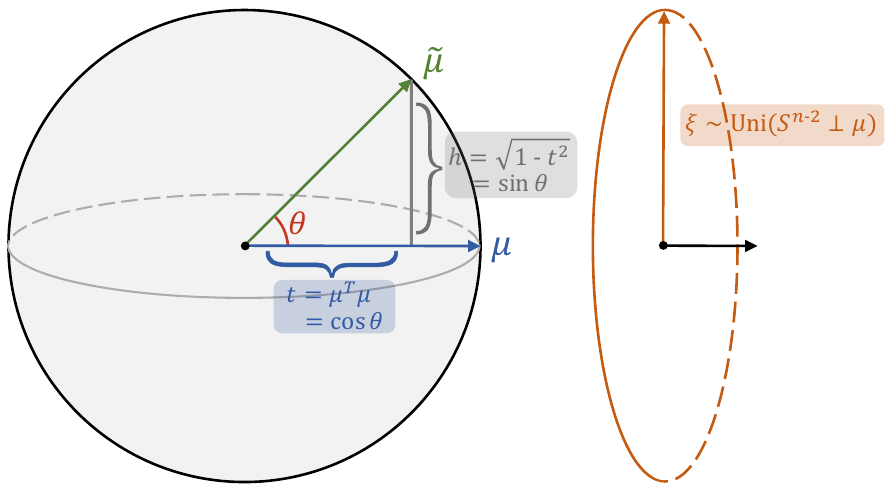}
    \caption{\textit{Tangent-normal decomposition in vMF sampling.} The sampled direction \( \tilde{\mu} \) is decomposed into an axial component \( t=\tilde{\mu}^{\top}\mu=\cos\theta \) along \( \mu \) and a tangential component of magnitude \( \sqrt{1-t^2}=\sin\theta \), where \( \xi \sim \mathrm{Unif}(\mathbb{S}^{n-2}\perp\mu) \).}
    \label{fig:Appendix_VMF}
\end{figure}

\subsection{Proofs related to the vMF Mechanism}

\begin{theorem}[\textbf{The vMF Mechanism Satisfies $\varepsilon d_{2}$-Privacy}]
\label{theo:satisfy_L2}
Let \(\mathcal{M}_{\mathrm{vMF}}\) denote the vMF mechanism. Then $\mathcal{M}_{\mathrm{vMF}}$ satisfies \(\varepsilon d_2\)-metric differential privacy, where
\[
d_2(\mathbf{u},\mathbf{v})=\|\mathbf{u}-\mathbf{v}\|_2.\]
For any \(\mathbf{u},\mathbf{v}\in\mathbb{S}^{d-1}\) and any measurable set \(S\subseteq\mathbb{S}^{d-1}\),
\[\Pr[\mathcal{M}_{\mathrm{vMF}}(\mathbf{u})\in S]
\le
\exp\!\big(\varepsilon\, d_2(\mathbf{u},\mathbf{v})\big)\,
\Pr[\mathcal{M}_{\mathrm{vMF}}(\mathbf{v})\in S].\]
\end{theorem}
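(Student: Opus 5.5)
The plan is to compare densities pointwise and then integrate over $S$. Fix $\mathbf{u},\mathbf{v}\in\mathbb{S}^{d-1}$. By Definition~\ref{def:vmf_dis}, the output laws $\mathcal{M}_{\mathrm{vMF}}(\mathbf{u})=\mathrm{vMF}(\mathbf{u},\kappa)$ and $\mathcal{M}_{\mathrm{vMF}}(\mathbf{v})=\mathrm{vMF}(\mathbf{v},\kappa)$ have densities with respect to the uniform surface measure. These densities share the \emph{same} normalization constant $C_{\mathrm{vMF}}(d,\kappa)$. The reason is rotational invariance: $\int_{\mathbb{S}^{d-1}}\exp(\kappa\,\mu^{\top}\mathbf{x})\,d\sigma(\mathbf{x})$ does not depend on $\mu$, since an orthogonal map sending $\mu$ to $\mu'$ preserves $\sigma$. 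I would state this invariance explicitly first, because the whole argument rests on the constants cancelling.

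Next, for any output $\mathbf{x}\in\mathbb{S}^{d-1}$, take the ratio of densities:
\[
\frac{\mathrm{vMF}(\mathbf{u},\kappa)[\mathbf{x}]}{\mathrm{vMF}(\mathbf{v},\kappa)[\mathbf{x}]}=\exp\!\big(\kappa\,(\mathbf{u}-\mathbf{v})^{\top}\mathbf{x}\big).
\]
Cauchy--Schwarz together with $\|\mathbf{x}\|_2=1$ gives $(\mathbf{u}-\mathbf{v})^{\top}\mathbf{x}\le\|\mathbf{u}-\mathbf{v}\|_2$. Hence the ratio is at most $\exp(\kappa\,d_2(\mathbf{u},\mathbf{v}))$ pointwise.

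Integrating this pointwise bound over an arbitrary measurable $S$ yields
\[
\Pr[\mathcal{M}_{\mathrm{vMF}}(\mathbf{u})\in S]\le e^{\kappa d_2(\mathbf{u},\mathbf{v})}\Pr[\mathcal{M}_{\mathrm{vMF}}(\mathbf{v})\in S].
\]
This is the claim with the privacy parameter identified as $\varepsilon=\kappa$, which is consistent with the main text's statement that $\kappa$ determines $\varepsilon$.

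Finally, the angular guarantee of Definition~\ref{def:1} follows as a corollary. Writing $\theta=d_{\angle}(\mathbf{u},\mathbf{v})$, we have $d_2(\mathbf{u},\mathbf{v})=2\sin(\theta/2)\le\theta$, so $\varepsilon d_2$-privacy implies $\varepsilon d_{\angle}$-privacy with the same $\varepsilon$.

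The main obstacle is only bookkeeping. One must be clear that the densities are taken against a common base measure and that the normalizers coincide; once the rotation-invariance step is justified, everything else is a one-line inequality.
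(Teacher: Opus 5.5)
Your proposal is correct and follows essentially the same route as the paper: bound the pointwise density ratio $\exp\!\big(\kappa(\mathbf{u}-\mathbf{v})^{\top}\mathbf{x}\big)$ by Cauchy--Schwarz on the unit sphere, then integrate over $S$, with $\varepsilon$ identified with $\kappa$. Your rotation-invariance argument for the shared normalizer makes explicit a step the paper takes for granted, and your angular corollary matches the paper's subsequent theorem.
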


\begin{proof}[Proof of Theorem~\ref{theo:satisfy_L2}]
 Let \(\mathbf{u},\mathbf{v}\in\mathbb{S}^{d-1}\) be arbitrary unit vectors, and let \(S\subseteq\mathbb{S}^{d-1}\) be any measurable set. For any \(\mathbf{z}\in S\), the vMF density satisfies
\[
\frac{p_{\mathbf{u}}(\mathbf{z})}{p_{\mathbf{v}}(\mathbf{z})}
=
\frac{C_d(\varepsilon)\exp(\epsilon\,\mathbf{u}^{\top}\mathbf{z})}
{C_d(\epsilon)\exp(\epsilon\,\mathbf{v}^{\top}\mathbf{z})}
=\exp\!\big(\epsilon(\mathbf{u}-\mathbf{v})^{\top}\mathbf{z}\big).
\]
By the Cauchy--Schwarz inequality,
\[
(\mathbf{u}-\mathbf{v})^{\top}\mathbf{z}
\le
\|\mathbf{u}-\mathbf{v}\|_2\,\|\mathbf{z}\|_2.
\]
Since \(\mathbf{z}\in\mathbb{S}^{d-1}\), we have \(\|\mathbf{z}\|_2=1\). Hence,
\[
\frac{p_{\mathbf{u}}(\mathbf{z})}{p_{\mathbf{v}}(\mathbf{z})}
\le
\exp\!\big(\varepsilon\|\mathbf{u}-\mathbf{v}\|_2\big)
=
\exp\!\big(\varepsilon d_2(\mathbf{u},\mathbf{v})\big).
\]
Integrating both sides over \(\mathbf{z}\in S\), we obtain
\begin{align*}
\Pr[\mathcal{M}_{\mathrm{vMF}}(\mathbf{u})\in S]
=&
\int_S p_{\mathbf{u}}(\mathbf{z})\,d\mathbf{z} \\
\le&
\exp\!\big(\varepsilon d_2(\mathbf{u},\mathbf{v})\big)
\int_S p_{\mathbf{v}}(\mathbf{z})\,d\mathbf{z} \\
=&
\exp\!\big(\varepsilon d_2(\mathbf{u},\mathbf{v})\big)
\Pr[\mathcal{M}_{\mathrm{vMF}}(\mathbf{v})\in S].
\end{align*}
Therefore, \(\mathcal{M}_{\mathrm{vMF}}\) satisfies \(\varepsilon d_2\)-metric differential privacy. 
\end{proof}

\begin{theorem}[\textbf{The vMF Mechanism Satisfies $\varepsilon d_{\angle}$-Privacy}]
\label{theo:satisfy_angle}
\(\mathcal{M}_{\mathrm{vMF}}\) also satisfies \(\varepsilon d_{\angle}\)-metric differential privacy, where
\[d_{\angle}(\mathbf{u},\mathbf{v})=\arccos(\mathbf{u}^{\top}\mathbf{v}).\]
For any \(\mathbf{u},\mathbf{v}\in\mathbb{S}^{d-1}\) and any measurable set \(S\subseteq\mathbb{S}^{d-1}\),
\[\Pr[\mathcal{M}_{\mathrm{vMF}}(\mathbf{u})\in S]
\leq
\exp\left(\varepsilon d_{\angle}(\mathbf{u},\mathbf{v})\right)
\Pr[\mathcal{M}_{\mathrm{vMF}}(\mathbf{v})\in S].\]
\end{theorem}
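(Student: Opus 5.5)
The plan is to derive Theorem~\ref{theo:satisfy_angle} directly from Theorem~\ref{theo:satisfy_L2}, via a pointwise comparison of the two metrics on the unit sphere. For \(\mathbf{u},\mathbf{v}\in\mathbb{S}^{d-1}\), write \(\theta=d_{\angle}(\mathbf{u},\mathbf{v})=\arccos(\mathbf{u}^{\top}\mathbf{v})\in[0,\pi]\). Since \(\|\mathbf{u}\|_2=\|\mathbf{v}\|_2=1\),
\[
d_2(\mathbf{u},\mathbf{v})^2=2-2\,\mathbf{u}^{\top}\mathbf{v}=2-2\cos\theta=4\sin^2(\theta/2).
\]
Hence \(d_2(\mathbf{u},\mathbf{v})=2\sin(\theta/2)\). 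This is the chord length subtending the arc of length \(\theta\).

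Next we use the elementary inequality \(\sin x\le x\) for \(x\ge 0\), applied with \(x=\theta/2\in[0,\pi/2]\). It gives
\[
d_2(\mathbf{u},\mathbf{v})=2\sin(\theta/2)\le\theta=d_{\angle}(\mathbf{u},\mathbf{v}),
\]
so the chord never exceeds the arc. Because \(\varepsilon>0\) and \(\exp\) is monotone, \(\exp(\varepsilon d_2(\mathbf{u},\mathbf{v}))\le\exp(\varepsilon d_{\angle}(\mathbf{u},\mathbf{v}))\).

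We then chain this with Theorem~\ref{theo:satisfy_L2}. For any measurable \(S\),
\[
\Pr[\mathcal{M}_{\mathrm{vMF}}(\mathbf{u})\in S]\le\exp(\varepsilon d_2(\mathbf{u},\mathbf{v}))\Pr[\mathcal{M}_{\mathrm{vMF}}(\mathbf{v})\in S]\le\exp(\varepsilon d_{\angle}(\mathbf{u},\mathbf{v}))\Pr[\mathcal{M}_{\mathrm{vMF}}(\mathbf{v})\in S],
\]
which is exactly the claim.

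There is no real obstacle; the only points needing care are bookkeeping ones. First, we must confirm that \(\varepsilon\) in the statement is the same constant as in Theorem~\ref{theo:satisfy_L2}, namely the concentration \(\kappa\); the earlier proof writes \(\epsilon\) and \(\varepsilon\) interchangeably. Second, we should check the edge cases \(\theta=0\) and \(\theta=\pi\) of the chord identity; both are immediate. If a self-contained argument is preferred instead, the same route applies at the density level. The ratio is \(\exp(\varepsilon(\mathbf{u}-\mathbf{v})^{\top}\mathbf{z})\le\exp(\varepsilon\|\mathbf{u}-\mathbf{v}\|_2)\le\exp(\varepsilon\theta)\), and integrating over \(S\) gives the result.
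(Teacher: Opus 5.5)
Your proposal is correct and matches the paper's proof essentially step for step: invoke Theorem~\ref{theo:satisfy_L2}, write $\|\mathbf{u}-\mathbf{v}\|_2=2\sin(\theta/2)$, bound this by $\theta=d_{\angle}(\mathbf{u},\mathbf{v})$, and chain the inequalities using monotonicity of $\exp$. Your side remark that $\varepsilon$ here plays the role of the concentration $\kappa$ (written as both $\epsilon$ and $\varepsilon$ in the earlier proof) is bookkeeping the paper leaves implicit.
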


\begin{proof}[Proof of Theorem~\ref{theo:satisfy_angle}]
From Theorem~\ref{theo:satisfy_L2}, the vMF mechanism satisfies
\[\Pr[\mathcal{M}_{\mathrm{vMF}}(\mathbf{u})\in S] \leq \exp\left(\varepsilon\|\mathbf{u}-\mathbf{v}\|_2\right)
\Pr[\mathcal{M}_{\mathrm{vMF}}(\mathbf{v})\in S].\]
It remains to show that
$\|\mathbf{u}-\mathbf{v}\|_2 \leq d_{\angle}(\mathbf{u},\mathbf{v}).$
Let \(\theta=d_{\angle}(\mathbf{u},\mathbf{v})\in[0,\pi]\). Since \(\mathbf{u}\) and \(\mathbf{v}\) are unit vectors,
\[\|\mathbf{u}-\mathbf{v}\|_2 = \sqrt{2-2\mathbf{u}^{\top}\mathbf{v}} =
\sqrt{2-2\cos\theta} = 2\sin\frac{\theta}{2}.\]
For any \(\theta\in[0,\pi]\), we have
$2\sin\frac{\theta}{2} \leq \theta.$
Therefore,
$\|\mathbf{u}-\mathbf{v}\|_2 \leq d_{\angle}(\mathbf{u},\mathbf{v}).$
Substituting this inequality into the result of Theorem 2 yields
\[\exp\left(\varepsilon\|\mathbf{u}-\mathbf{v}\|_2\right)
\leq\exp\left(\varepsilon d_{\angle}(\mathbf{u},\mathbf{v})\right).\]
Hence,
\[
\begin{aligned}
\Pr[\mathcal{M}_{\mathrm{vMF}}(\mathbf{u})\in S]
&\leq
\exp\!\left(\varepsilon d_{\angle}(\mathbf{u},\mathbf{v})\right) \\
&\quad \cdot
\Pr[\mathcal{M}_{\mathrm{vMF}}(\mathbf{v})\in S].
\end{aligned}
\]
Thus, the mechanism satisfies
$\varepsilon d_{\angle}$-metric differential privacy.
\end{proof}

\begin{proof}[Proof of Theorem~\ref{theo:2}]
Since
$
\mathbf{g}
=
\|\mathbf{g}\|_2\mathbf{u}
\quad
\text{and}
\quad
\tilde{\mathbf{g}}
=
\|\mathbf{g}\|_2\tilde{\mathbf{u}},
$
the perturbation noise can be written as
\[
\mathbf{n}
=
\tilde{\mathbf{g}}-\mathbf{g}
=
\|\mathbf{g}\|_2(\tilde{\mathbf{u}}-\mathbf{u}).
\]
Therefore,
$\|\mathbf{n}\|_2=\|\mathbf{g}\|_2\|\tilde{\mathbf{u}}-\mathbf{u}\|_2.$
Because both \(\mathbf{u}\) and \(\tilde{\mathbf{u}}\) are unit vectors, their Euclidean distance is at most \(2\):
\[\|\tilde{\mathbf{u}}-\mathbf{u}\|_2
\leq\|\tilde{\mathbf{u}}\|_2+\|\mathbf{u}\|_2=2.\]
Thus,
$
\|\mathbf{n}\|_2
\leq
2\|\mathbf{g}\|_2.
$
It follows that
\[\mathrm{ASNR}=\frac{\|\mathbf{g}\|_2}{\|\mathbf{n}\|_2}
\geq\frac{\|\mathbf{g}\|_2}{2\|\mathbf{g}\|_2}=\frac{1}{2}.\]
The equality holds when
$\|\tilde{\mathbf{u}}-\mathbf{u}\|_2=2,$
which occurs if and only if \(\tilde{\mathbf{u}}=-\mathbf{u}\). Hence, the bound is tight.
\end{proof}

\section{Details of Selective Autoregressive Supervision}
\subsection{Last-$k$ Tokens in Selective Autoregressive Supervision}

\begin{figure}[t]
    \centering
    \includegraphics[width=0.85\linewidth]{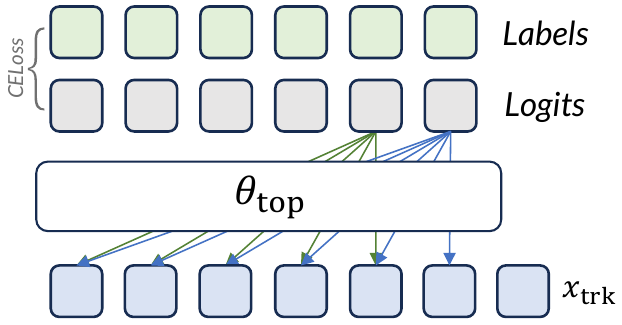}
    \caption{\textit{ Illustration of the zero-gradient suffix induced by masking the last \(k\) target tokens in selective autoregressive supervision.} The arrows indicate the direction of gradient propagation.}
    \label{fig:lastk}
    \vspace{-8pt}
\end{figure}

\begin{figure*}[t]
    \centering
    \includegraphics[width=1\linewidth]{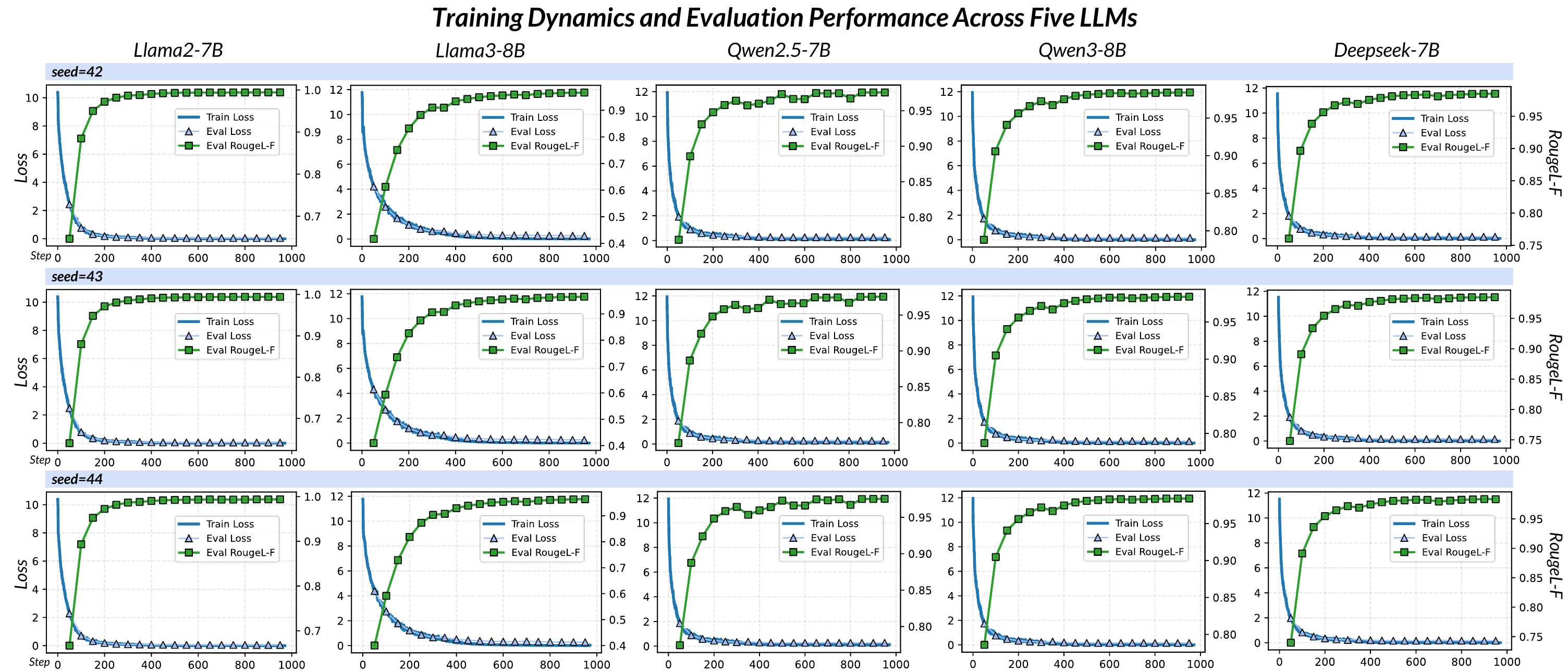}
    \caption{\textit{Training dynamics of the learning-based inversion decoder across five LLMs.} Shown are the training loss, evaluation loss, and evaluation RougeL-F over training steps under three random seeds.}
    \label{fig_Appendix:LiD}
\end{figure*}

We next discuss a practical implementation detail of SAS from the client's perspective. 

While Selective Autoregressive Supervision is designed to perturb the supervision structure exposed through the split-interface gradient, its implementation should also preserve the stealthiness of the defense. 

In particular, if the last \(k\) consecutive target tokens are all excluded from the loss computation, then the corresponding gradient signals on a contiguous block of \(k\) token representations in \(x_{\mathrm{trk}}\) become identically zero. 
Such a structured zero-gradient suffix can serve as an explicit artifact of the defense and may reveal the existence of selective masking to an adversarial server.

To see this more formally, consider the autoregressive loss with a binary supervision mask \(m_t \in \{0,1\}\):
\begin{equation}
\mathcal{L}_{M}=
\frac{1}{\sum_{t=1}^{T} m_t}
\sum_{t=1}^{T} m_t \ell_t,
\
\ell_t = -\log p_t(y_t),
\end{equation}
where \(p_t\) denotes the predictive distribution for the target token \(y_t\). 
Let \(h_i\) denote the split-interface representation at position \(i\), i.e., the \(i\)-th token representation in \(x_{\mathrm{trk}}\). 
Due to the causal structure of autoregressive decoding, the representation \(h_i\) can only influence the losses of future or aligned target positions. 
Abstractly, we may write
\begin{equation}
\frac{\partial \mathcal{L}_{M}}{\partial h_i}=\frac{1}{\sum_{t=1}^{T} m_t}
\sum_{t \in \mathcal{D}(i)}
m_t
\frac{\partial \ell_t}{\partial h_i},
\end{equation}
where \(\mathcal{D}(i)\) denotes the set of supervised prediction positions whose computation depends on \(h_i\). 
For the last few token positions, this dependency set contains only a small number of subsequent losses. 
Therefore, if the last \(k\) target positions are all masked out, namely $m_{T-k+1}=m_{T-k+2}=\cdots=m_T=0,$
then for the corresponding suffix positions whose only downstream supervised losses lie in this masked region, we obtain
$\frac{\partial \mathcal{L}_{M}}{\partial h_i}=0.$
Consequently, the gradient returned to the server-side Trunk segment contains a contiguous zero-gradient block over the suffix of \(x_{\mathrm{trk}}\). 
This phenomenon is illustrated in Fig.~\ref{fig:lastk}.

This artifact is undesirable because it weakens the concealment of the defense. 
From the client's perspective, the goal of SAS is not only to introduce a supervision mismatch against gradient-matching attacks, but also to avoid leaving obvious structural signatures in the exposed gradient. 
A consecutive zero-gradient suffix is particularly problematic because it is independent of token content and can be detected directly from the norm pattern of the interface gradient. 
An adversary may observe that
$\left\|
\frac{\partial \mathcal{L}_{M}}{\partial h_i}
\right\|_2 = 0$
for several consecutive positions near the end of the sequence, and infer that these positions were systematically removed from supervision.

To avoid this issue, SAS should ensure that the last \(k\) token positions are not simultaneously excluded from the masked loss. 
Equivalently, the supervision mask should satisfy
$\sum_{t=T-k+1}^{T} m_t \geq 1.$
This constraint guarantees that at least one suffix target token contributes to the loss, thereby preventing a fully zero-gradient suffix from appearing in \(x_{\mathrm{trk}}\). 
To prevent such a zero-gradient suffix, we enforce the last \(k\) tokens to be valid supervised positions in our implementation. We set \(k=5\) throughout our experiments.

\section{More Experimental Details}
\label{sec:Appendix_C}

\subsection{Definitions of Gradient Utility Metrics}
\label{sec:Appendix_C1}
To quantify the utility of privatized gradients, we use four complementary metrics: the Amplitude Signal-to-Noise Ratio (ASNR), ASNR@10\%, Recall@10\% (R@10\%), and Jaccard@10\% (J@10\%). Let \(g \in \mathbb{R}^{LD}\) denote the original gradient and \(\tilde{g} \in \mathbb{R}^{LD}\) the privatized gradient. The injected noise is defined as $n=\tilde{g}-g.$

\noindent\textbf{ASNR.} The Amplitude Signal-to-Noise Ratio measures the overall gradient magnitude relative to the perturbation magnitude:
\[\mathrm{ASNR}(g,\tilde{g})=\frac{\|g\|_2}{\|n\|_2}.\]

\noindent\textbf{ASNR@10\%.} Let \(S_{10}(g)\) denote the index set of the top 10\% entries of \(g\) ranked by absolute value. ASNR@10\% is computed on this subset:
\[\mathrm{ASNR@10\%}(g,\tilde{g})=
\frac{\|g_{S_{10}(g)}\|_2}{\|n_{S_{10}(g)}\|_2}.\]

\noindent\textbf{Recall@10\%.} Let \(S_{10}(g)\) and \(S_{10}(\tilde{g})\) be the top 10\% index sets of \(g\) and \(\tilde{g}\), respectively. Recall@10\% measures the fraction of salient coordinates in the original gradient that are retained after privatization:
\[\mathrm{R@10\%}(g,\tilde{g})=
\frac{|S_{10}(g)\cap S_{10}(\tilde{g})|}{|S_{10}(g)|}.\]

\noindent\textbf{Jaccard@10\%.} Jaccard@10\% measures the overlap between the two top-10\% index sets:
\[\mathrm{J@10\%}(g,\tilde{g})=
\frac{|S_{10}(g)\cap S_{10}(\tilde{g})|}{|S_{10}(g)\cup S_{10}(\tilde{g})|}.\]
ASNR and ASNR@10\% evaluate magnitude preservation, while R@10\% and J@10\% measure support preservation.

\begin{figure*}[t]
    \centering
    \includegraphics[width=0.95\linewidth]{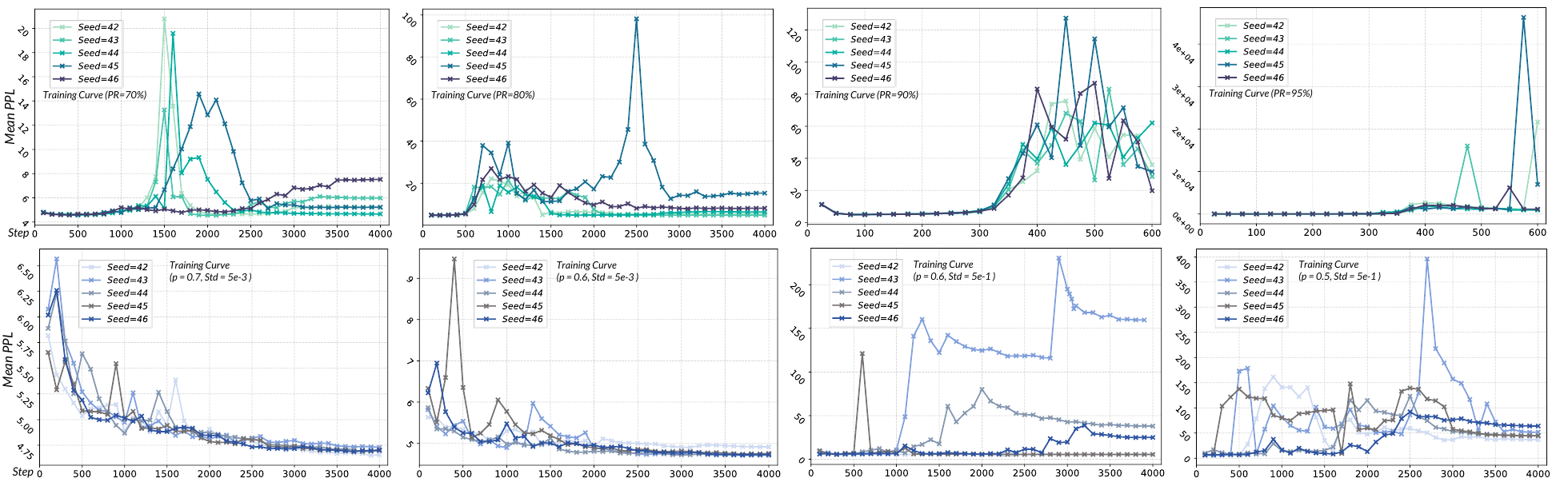}
    \caption{\textit{Training Curves of Gradient Pruning and Gradient Dropout.}}
    \label{fig_Appendix:GP}
\end{figure*}
\begin{table*}[t]
\centering

\renewcommand{\arraystretch}{1.05}
\resizebox{0.95\textwidth}{!}{%
\begin{tabular}{cccccccccc}
\toprule
Dataset   & Method   &  Final PPL & PPL@500steps & PPL@1000steps  &  RougeL-F      & Rouge1-F           & Rouge2-F           & Meteor             & TRR                \\ 
\midrule
\multirow{8}{*}{\begin{tabular}[c]{@{}c@{}}Code-\\ Alpaca\end{tabular}} & Grad Pruning       & 5.292{\scriptsize $\pm$.538}   & 4.595±.001   & 4.934±.017    & .995±.001          & .995±.001          & .981±.001          & .987±.001          & .990±.000          \\
                 & GradSeq-LDP(10240) & 4.670±.002  & 4.962±.055  & 4.782±.043  & .564±.032    & .567±.037   & .413±.020          & .570±.016     & .634±.007       \\
                & GradSeq-LDP(12288) & 4.611±.037   & 4.915±.022   & 4.804±.050 & .657±.029     & .661±.030     & .517±.042       & .664±.042       & .717±.041     \\
                & Grad Dropout       & 4.782±.104   & 5.341±.158   & 4.954±.041    & .942±.008     & .942±.008     & .915±.012   & .940±.007          & .943±.013          \\
                                        & \textbf{Gradient Mirage(512)}        & 4.451±.001   & 4.671±.022   & 4.610±.008   & \textbf{.202±.002} & \textbf{.208±.003} & \textbf{.108±.001} & \textbf{.226±.000} & \textbf{.255±.002} \\
                                        & \textbf{Gradient Mirage(1024)}         & 4.421±.008   & 4.603±.008   & 4.552±.013   & \textbf{.351±.022} & \textbf{.361±.022} & \textbf{.177±.020} & \textbf{.341±.017} & \textbf{.371±.009} \\
                            
                                        & Top-Only Training  & 4.831±.005   & 5.190±0.006  & 4.970±.004    & -       & -    & -         & -          & -          \\
                                        & Standard Training  & 4.444±.004   & 4.504±.012   & 4.447±.005    & 1.000±.000    & 1.000±.000         & .989±.002   & .991±.002    & .992±.001    \\ 
\midrule
\multirow{8}{*}{PIQA}                    & Grad Pruning       & 11.500±2.161 & 9.229±.598   & 13.624±.095   & .959±.011          & .959±.011          & .908±.023          & .935±.017        & .938±.013          \\
                                        & GradSeq-LDP(10240) & 7.738±.056   & 7.443±.025  & 7.212±.017  & .799±.034  & .804±.033   & .613±.039    & .760±.048    & .758±.028     \\
                                        & GradSeq-LDP(12288) & 7.202±.019   & 7.726±.040   & 7.416±.021    & .805±.033 & .810±.032          & .626±.056    & .769±.049   & .762±.039          \\
                                        & Grad Dropout       & 7.758±.029   & 9.111±.265   & 8.316±.145    & .950±.006          & .950±.006   & .867±.002    & .912±.003    & .906±.008    \\
                                    & \textbf{Gradient Mirage(512)}   & 7.404±.020   & 8.120±.042   & 7.894±.100    & \textbf{.209±.017} & \textbf{.215±.017} & \textbf{.040±.008} & \textbf{.137±.013} & \textbf{.183±.013} \\
                                    & \textbf{Gradient Mirage(1024)}   & 7.191±.033   & 7.824±.043   & 7.456±.035    & \textbf{.448±.020} & \textbf{.455±.019} & \textbf{.145±.014} & \textbf{.306±.025} & \textbf{.374±.007} \\
                                        & Top-Only Training  & 8.874±.018   & 9.613±.027   & 9.269±.029    & -                  & -               & -                  & -                  & -                  \\
                                           & Standard Training  & 7.396±.013   & 7.338±.036   & 7.106±.017    & .993±.004          & .993±.004          & .969±.010          & .975±.007          & .978±.005          \\ 
\midrule
\multirow{8}{*}{GSM8K}                                                  & Grad Pruning       & 12.227±1.131 & 10.769±.059  & 10.973±.301   & .997±.003          & .997±.006          & .994±.003          & .996±.012          & .996±.002          \\
                                            & GradSeq-LDP(10240) & 11.038±.025  & 11.712±.152  & 11.431±.005   & .904±.014          & .904±.014          & .843±.013          & .923±.007          & .914±.007          \\
                                          & GradSeq-LDP(12288) & 11.122±.039  & 11.669±.114  & 11.423±.058   & .885±.019          & .886±.018          & .821±.024          & .910±.013          & .901±.013          \\
                                        & Grad Dropout       & 12.477±.972  & 12.556±.153  & 12.365±.172   & .949±.010          & .949±.011          & .904±.015          & .940±.010          & .927±.011          \\
                                          & \textbf{Gradient Mirage(512)}     & 10.643±.031  & 10.958±.051  & 10.806±.055   & \textbf{.112±.008} & \textbf{.119±.007} & \textbf{.011±.005} & \textbf{.070±.006} & \textbf{.079±.008} \\
                                           & \textbf{Gradient Mirage(1024)}     & 10.500±.011  & 10.710±.042  & 10.587±.034   & \textbf{.363±.012} & \textbf{.387±.009} & \textbf{.087±.007} & \textbf{.213±.013} & \textbf{.243±.017} \\
                                          & Top-Only Training  & 11.999±.007  & 12.283±.005  & 12.075±.004   & -      & -      & -       & -          & -                  \\
                                            & Standard Training  & 10.588±.003  & 10.471±.059  & 10.416±.016   & 1.000±.000         & 1.000±.000         & .993±.001          & .995±.001          & .996±.000          \\ \bottomrule
\end{tabular}

}

\caption{\textit{Comparison between Gradient Mirage and existing
privacy-preserving methods with Llama3-8B.}}
\label{tab_appendix:big_llama3}
\vspace{-3pt}
\end{table*}

\subsection{Training Details of the Learning-based Inversion Decoder}
Here we present the training details of the learning-based inversion decoder used in the semi-white-box Forward Inversion Paradigm (SIP)~\cite{SplitLLM}, which corresponds to the “SIP-only” setting in our experiments. The goal of this decoder is to directly reconstruct the original input token sequence from the smashed data produced by the client-side Bottom segment.

Following the setting of the original work, we use the open-source CNN-DailyMail News Text Summarization dataset as the auxiliary training data~\cite{CNNdaily}. Specifically, the training data is derived from the CNN-DailyMail news summarization corpus.

The inversion decoder takes the smashed data generated by the Bottom segment as input. In our setting, the Bottom segment contains 6 decoder layers. Given the hidden representations with embedding dimension $D$, the inverter uses a single-layer unidirectional GRU~\cite{GRU} with hidden size 4096 to model the sequential dependency of the intermediate activations. The GRU output at each token position is then passed through a linear projection layer, which maps the hidden state to the vocabulary space and produces token-level logits. A dropout rate of 0.1 is applied after the GRU layer during training.

The inverter is trained in an auto-encoder-style manner. The Bottom segment of the target language model is used as a frozen encoder to generate intermediate representations from the auxiliary text data, while only the inversion decoder is optimized. The training objective is token-level cross-entropy loss between the decoder output and the original input token sequence. In this way, the decoder learns to map the smashed data back to the corresponding textual tokens.

\begin{figure*}[t]
    \centering
    \includegraphics[width=0.98\linewidth]{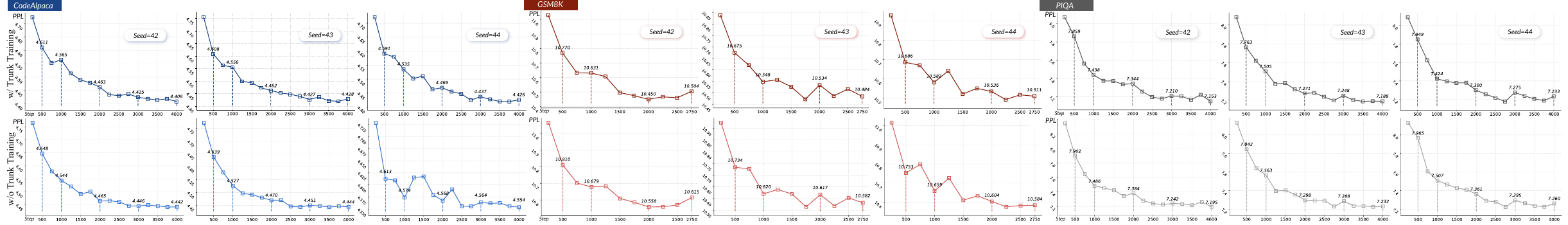}
    \caption{\textit{Comparison between the training dynamics with and without Trunk segment updates on Llama3-8B across three datasets.} Updating the Trunk segment generally leads to lower validation perplexity across random seeds.}
    \label{fig_Appendix:Trunk_curve}
\end{figure*}
\begin{table*}[t]
\centering
\renewcommand{\arraystretch}{1.05}
\resizebox{0.95\textwidth}{!}{%

\begin{tabular}{cccccccccc}
\toprule
Dataset   & Method    & Final PPL    & PPL@500steps & PPL@1000steps & RougeL-F  & Rouge1-F  & Rouge2-F  & Meteor    & TRR       \\ 
\midrule
\multirow{8}{*}{\begin{tabular}[c]{@{}c@{}}Code-\\ Alpaca\end{tabular}} & Grad Pruning   & 5.000±.089    & 4.864±.088    & 4.765±.044      & .993±.003 & .993±.003 & .988±.006 & .989±.005 & .995±.003 \\
                            & GradSeq-LDP(10240) & 4.804±.083      & 5.141±.149                 & 4.991±.155    & .775±.033 & .776±.034  & .656±.048 & .741±.043 & .886±.018 \\
                            & GradSeq-LDP(12288) & 4.853±.131                          & 5.132±.143                                              & 5.024±.172                                               & .820±.037 & .821±.037 & .725±.052 & .795±.043 & .909±.018 \\
                            & Grad Dropout   & 6.979±1.575   & 7.329±.929    & 82.223±129.115      & .902±.017 & .902±.016 & .859±.026 & .890±.016 & .932±.008 \\
                            & \textbf{Gradient Mirage(512)}     & 4.817±.098    & 5.186±.124    & 5.066±.068   & \textbf{.095±.012} & \textbf{.097±.012} & \textbf{.016±.002} & \textbf{.112±.005} & \textbf{.199±.010} \\
                            & \textbf{Gradient Mirage(1024)}     & 4.772±.049    & 5.031±.136    & 5.061±.124   & \textbf{.174±.020} & \textbf{.177±.019} & \textbf{.042±.007} & \textbf{.146±.008} & \textbf{.297±.004} \\
                            & Top-Only Training  & 4.961±.040    & 5.231±.158   & 5.025±.083          & -         & -         & -         & -         & -         \\
                            & Standard Training  & 4.731±.018      & 4.776±.108   & 4.669±.035        & .990±.003 & .990±.003 & .981±.006 & .982±.005 & .992±.003 \\ 
\midrule
\multirow{8}{*}{PIQA}       & Grad Pruning       & 6.501±.615    & 5.849±.055                                              & 5.576±.188                                               & .996±.004 & .996±.004 & .991±.009 & .995±.005 & .997±.003 \\
                            & GradSeq-LDP(10240) & 6.302±.656                          & 6.325±.248                                              & 6.280±.208                                               & .792±.038 & .793±.038 & .635±.057 & .744±.043 & .861±.008 \\
                            & GradSeq-LDP(12288) & 5.851±.049                          & 6.147±.067                                              & 6.172±.256                                               & .826±.017 & .826±.017 & .691±.026 & .786±.023 & .883±.005 \\
                            & Grad Dropout       & 7.757±1.772                         & 11.069±.524                                             & 8.452±1.022                                              & .813±.020 & .815±.017 & .743±.026 & .798±.019 & .859±.008 \\
                            & \textbf{Gradient Mirage(512)} & 5.631±.130 & 5.770±.095 & 5.767±.052 & \textbf{.047±.015} & \textbf{.047±.016} & \textbf{.001±.002} & \textbf{.046±.001} & \textbf{.115±.007} \\
                            
                            & \textbf{Gradient Mirage(1024)}   & 5.561±.232   & 5.748±.061      & 5.607±.065    & \textbf{.102±.012} & \textbf{.103±.011} & \textbf{.016±.003} & \textbf{.088±.013} & \textbf{.203±.016}  \\
                            & Top-Only Training  & 6.272±.059                          & 6.661±.129                                              & 6.465±.208                                               & -         & -         & -         & -         & -         \\
                            & Standard Training  & 5.285±.095                          & 5.380±.113                                              & 5.339±.060                                               & .994±.006 & .994±.006 & .988±.012 & .989±.011 & .994±.006 \\ 
\midrule
\multirow{8}{*}{GSM8K}      & Grad Pruning       & 10.659±.230                         & 10.520±.089           & 10.498±.164                                              & .978±.005 & .978±.005 & .970±.005 & .982±.002 & .991±.001 \\
                            & GradSeq-LDP(10240) & 10.487±.152                         & 10.621±.052                                             & 10.337±.134                                              & .375±.021 & .390±.019 & .203±.019 & .333±.013 & .629±.014 \\
                            & GradSeq-LDP(12288) & 10.483±.143                         & 10.608±.062                                             & 10.338±.134                                              & .440±.028 & .456±.027 & .264±.028 & .404±.029 & .680±.016 \\
                            & Grad Dropout       & \multicolumn{1}{l}{107.082±162.999} & \multicolumn{1}{l}{48.571±61.245}                       & \multicolumn{1}{l}{73.177±103.329}                       & .897±.011 & .897±.011 & .880±.004 & .921±.003 & .947±.005 \\
   
                            & \textbf{Gradient Mirage(512)} & 10.430±.159 & 10.760±.207 & 10.302±.055 & \textbf{.031±.009} & \textbf{.031±.009} & \textbf{.002±.001} & \textbf{.042±.006} & \textbf{.082±.012} \\
                            & \textbf{Gradient Mirage(1024)}    & 10.142±.169      & 10.387±.257     & 10.237±.300     & \textbf{.075±.012}  & \textbf{.078±.012} & \textbf{.005±.002} & \textbf{.071±.011} & \textbf{.154±.024}  \\
                            & Top-Only Training  & 11.198±.279                         & 11.644±.090                                             & 11.295±.172                                              & -         & -         & -         & -         & -         \\
                            & Standard Training  & 9.714±.138                          & 9.934±.289                                              & 9.786±.168                                               & .977±.001 & .977±.001 & .968±.001 & .981±.000 & .991±.000 \\ \bottomrule
\end{tabular}

}

\caption{\textit{Comparison between Gradient Mirage and existing privacy-preserving methods with Llama2-7B.}}
\label{tab_appendix:big_llama2}

\end{table*}

We train the inversion decoder for 3 epochs with a batch size of 32. The model is optimized using AdamW with a learning rate of 1e-3 and a weight decay of 1e-5. After training, the learned inverter can be directly applied to the smashed data observed during split fine-tuning to produce an initial reconstruction of the client’s private input sequence. Fig.~\ref{fig_Appendix:LiD} illustrates the training dynamics of the decoder on five LLMs~\cite{llama2,llama3, qwen2.5, qwen3, deepseekllm}.

\subsection{Training Curves of GP and GD}
Fig.~\ref{fig_Appendix:GP} illustrates the training curves of GP and GD. It can be observed that, to achieve strong privacy protection with GP or GD, the training process becomes extremely unstable, making it challenging to achieve a favorable privacy-utility trade-off.

\subsection{Ablation Details of Trunk Training}
Fig.~\ref{fig_Appendix:Trunk_curve} illustrates the differences in training dynamics with and without Trunk segment updates, evaluated on Llama3-8B across three datasets. For each dataset, the upper panel corresponds to training with Trunk segment updates, which consistently yields better performance.

\subsection{Extended Defense Experiments}
\label{sec:c5}
In the main text, we evaluate defense performance using only ROUGE-L-F and METEOR. Here, we provide additional results for ROUGE-1-F, ROUGE-2-F, and Token Recovery Rate (TRR), as well as results over more training steps. Tab.~\ref{tab_appendix:big_llama3}, \ref{tab_appendix:big_llama2}, and \ref{tab_appendix:big_deepseek} report the results for Llama-3-8B, Llama-2-7B, and DeepSeek-LLM-7B, respectively. We further report the defense performance against the BiSR(b) paradigm when cosine distance is used as the matching objective, as shown in Tab.~\ref{tab_appendix:cos}.

\subsection{Robust Training under Extreme Scale Blinding}
While the default mean scale \(m=1750\) provides a favorable privacy-utility trade-off, we further investigate whether Gradient Mirage remains trainable under extremely large scaling factors. As \(m\) becomes excessively large, the amplified gradients introduced by Scale Blinding may cause optimization instability, particularly for the Trunk segment whose gradients are directly affected by the perturbed objective.

To address this issue, we adopt a simple Trunk-Frozen Training strategy: the gradients of the Trunk segment are discarded, while the Top segment continues to be optimized normally and the Bottom segment is updated using the recovered gradients produced by Bottom-Gradient Recovery. Since the recovered Bottom gradients remove the expected amplification introduced by Scale Blinding, they remain relatively stable and informative for optimization.

We evaluate this strategy under significantly larger mean scales \(m\in\{4\times10^3,10^4,2.5\times10^4,5\times10^4,10^5\}\). As shown in Fig.~\ref{fig_Appendix:scale_large}, the proposed training strategy maintains stable convergence across all tested scales and consistently outperforms Top-Only Training.

\begin{table*}[t]
\centering
\renewcommand{\arraystretch}{1.05}
\resizebox{0.95\textwidth}{!}{%

\begin{tabular}{cccccccccc}
\toprule
Dataset     & Method      & Final PPL     &  PPL@500steps & PPL@1000steps & RougeL-F           & Rouge1-F           & Rouge2-F           & Meteor             & TRR                \\ 
\midrule
\multirow{7}{*}{\begin{tabular}[c]{@{}c@{}}Code-\\ Alpaca\end{tabular}} & Grad Pruning   & 5.389±.050      & 5.305±.027                                              & 5.475±.094                                               & .991±.001          & .991±.001          & .968±.004          & .967±.001          & .976±.002          \\
                             & GradSeq-LDP(8192)  & 5.045±.049   & 5.319±.125  & 5.358±.099   & .598±.015  & .605±.015  & .376±.028  & .540±.026   & .738±.015       \\
                             & GradSeq-LDP(10240) & 5.000±.098      & 5.327±.100     & 5.252±.160   & .741±.020 & .746±.018      & .637±.024    & .709±.030          & .844±.014     \\
                             & Grad Dropout     & 6.328±.333      & 6.371±.575      & 7.167±1.046     & .681±.040   & .682±.040   & .584±.041    & .682±.041          & .718±.044      \\
                             & \textbf{Gradient Mirage(512)}         & 5.304±.223      & 5.397±.158     & 5.404±.028    & \textbf{.070±.023} & \textbf{.071±.024} & \textbf{.030±.013} & \textbf{.045±.015} & \textbf{.163±.030} \\
                             
                             & \textbf{Gradient Mirage(1024)}         & 5.149±.135      & 5.260±.160     & 5.292±.024    & \textbf{.105±.007} & \textbf{.107±.008} & \textbf{.028±.008} & \textbf{.062±.009} & \textbf{.204±.011} \\
                             & Top-Only Training  & 5.438±.121      & 5.801±.042            & 5.636±.045       & -          & -          & -          & -           & -       \\
                             & Standard Training  & 4.378±.001      & 4.495±.001     & 4.418±.001      & .998±.000      & .998±.000     & .984±.001   & .977±.001          & .983±.001          \\ 
\midrule
\multirow{7}{*}{PIQA}                                                   & Grad Pruning   & 14.387±2.094    & 13.592±.867     & 14.741±1.939   & .942±.016    & .944±.015   & .882±.026         & .928±.024   & .931±.019      \\
                                & GradSeq-LDP(8192)  & 6.710±.235      & 6.921±.032       & 6.798±.105    & .819±.020          & .822±.019       & .659±.039          & .763±.027          & .831±.005          \\
                                 & GradSeq-LDP(10240) & 6.717±.274      & 6.870±.022     & 6.749±.073                & .845±.010          & .848±.009     & .694±.008          & .712±.019          & .827±.014          \\
                                & Grad Dropout       & 301.776±415.971 & 11.223±2.122      & 118.565±154.226        & .844±.013          & .854±.024          & .773±.017          & .820±.025          & .848±.008          \\
                                & \textbf{Gradient Mirage(512)}    & 6.673±.164      & 6.917±.162                  & 6.923±.295     & \textbf{.071±.005} & \textbf{.072±.005} & \textbf{.008±.005} & \textbf{.036±.004} & \textbf{.135±.009} \\
                                
                                 & \textbf{Gradient Mirage(1024)}    & 6.451±.015      & 7.089±.081                   & 6.940±.195     & \textbf{.142±.009} & \textbf{.148±.010} & \textbf{.021±.008} & \textbf{.081±.014} & \textbf{.221±.012} \\
                                                                        & Top-Only Training  & 7.444±.130      & 7.935±.064                                              & 7.688±.008                                               & -                  & -                  & -                  & -                  & -                  \\
                                                                        & Standard Training  & 5.794±.008      & 6.022±.002                                              & 5.870±.016                                               & .999±.002          & .999±.002          & .979±.009          & .968±.012          & .966±.007          \\ 
\midrule
\multirow{7}{*}{GSM8K}                                                  & Grad Pruning  & 39.113±35.004   & 7.430±.183    & 7.766±.136    & .792±.013     & .796±.015          & .695±.024          & .753±.029          & .792±.024          \\
                                 & GradSeq-LDP(8192)  & 12.126±.119     & 13.091±.384                                             & 12.667±.296                                              & .312±.017          & .332±.015          & .131±.023          & .214±.027          & .475±.009          \\
                                     & GradSeq-LDP(10240) & 12.130±.115     & 12.981±.326                                             & 12.507±.313                                              & .344±.014          & .364±.018          & .156±.021          & .241±.025          & .513±.008          \\
                                     & Grad Dropout       & 16.104±3.461    & 17.533±3.572                                            & 17.125±3.352                                             & .843±.024          & .852±.025          & .784±.040          & .848±.030          & .909±.011          \\
                                        & \textbf{Gradient Mirage(512)}         & 11.989±.105     & 12.796±.149     & 12.327±.165   & \textbf{.028±.002} & \textbf{.030±.001} & \textbf{.000±.000} & \textbf{.012±.001} & \textbf{.078±.013} \\
                                        & \textbf{Gradient Mirage(1024)}         & 12.037±.294     & 13.006±.181             & 12.659±.100         & \textbf{.074±.005} & \textbf{.077±.006} & \textbf{.004±.000} & \textbf{.032±.003} & \textbf{.163±.012} \\
                                      & Top-Only Training  & 13.719±.105     & 14.407±.478                                             & 13.873±.390                                              & -                  & -                  & -                  & -                  & -                  \\
                                                                        & Standard Training  & 10.007±.002     & 10.173±.021                                             & 10.094±.008                                              & .978±.002          & .978±.002          & .962±.003          & .976±.001          & .986±.001          \\ 
           \bottomrule
\end{tabular}

}

\caption{\textit{Comparison between Gradient Mirage and existing privacy-preserving methods with DeepSeek-LLM-7B.}}
\label{tab_appendix:big_deepseek}

\end{table*}

\begin{table*}[t]
\centering
\resizebox{0.92\textwidth}{!}{%
\begin{tabular}{cc|ccc|ccc|ccc}
\toprule
\multicolumn{2}{c|}{Dataset}                                                                                     & \multicolumn{3}{c|}{CodeAlpaca}   & \multicolumn{3}{c|}{PIQA}         & \multicolumn{3}{c}{GSM8K}         \\ 
\midrule
\multicolumn{1}{c|}{Model}                                                                       & Method        & RougeL-F  & Rouge2-F  & Meteor    & RougeL-F  & Rouge2-F  & Meteor    & RougeL-F  & Rouge2-F  & Meteor    \\ 
\midrule
\multicolumn{1}{c|}{\multirow{7}{*}{\begin{tabular}[c]{@{}c@{}}Llama3\\ -8B\end{tabular}}}       & GP(0.7)       & .488±.006 & .181±.002 & .347±.002 & .691±.003 & .444±.007 & .601±.001 & .473±.008 & .179±.012 & .307±.011 \\
\multicolumn{1}{c|}{}                                                                            & GD(0.6, 5e-3) & .089±.016 & .009±.009 & .061±.010 & .237±.010 & .088±.012 & .186±.007 & .066±.012 & .000±.000 & .039±.003 \\
\multicolumn{1}{c|}{}                                                                            & LDP(10240)    & .103±.012 & .010±.005 & .071±.010 & .309±.007 & .106±.003 & .234±.008 & .073±.006 & .003±.002 & .046±.003 \\
\multicolumn{1}{c|}{}                                                                            & LDP(12288)    & .132±.005 & .018±.007 & .093±.004 & .322±.012 & .114±.005 & .247±.011 & .080±.002 & .003±.001 & .046±.003 \\
\multicolumn{1}{c|}{}                                                                            & Standard      & .557±.012 & .254±.015 & .416±.013 & .716±.007 & .448±.009 & .625±.009 & .469±.016 & .177±.016 & .317±.016 \\
\cmidrule{2-11} 
\multicolumn{1}{c|}{}                                                                            & Ours(512)     & .094±.005 & .002±.001 & .068±.003 & .121±.017 & .013±.002 & .082±.004 & .103±.004 & .003±.001 & .066±.005 \\
\multicolumn{1}{c|}{}                                                                            & Ours(1024)    & .161±.011 & .008±.002 & .106±.003 & .246±.019 & .040±.009 & .156±.012 & .136±.015 & .012±.002 & .105±.010 \\ 
\midrule
\multicolumn{1}{c|}{\multirow{7}{*}{\begin{tabular}[c]{@{}c@{}}Llama2\\ -7B\end{tabular}}}       & GP(0.7)       & .478±.017 & .285±.029 & .416±.027 & .629±.017 & .464±.021 & .573±.022 & .393±.015 & .202±.015 & .334±.016 \\
\multicolumn{1}{c|}{}                                                                            & GD(0.6, 5e-3) & .016±.007 & .001±.001 & .008±.003 & .037±.014 & .003±.002 & .021±.008 & .012±.001 & .000±.000 & .008±.001 \\
\multicolumn{1}{c|}{}                                                                            & LDP(10240)    & .092±.017 & .016±.003 & .054±.007 & .225±.010 & .069±.011 & .152±.016 & .026±.002 & .002±.000 & .015±.001 \\
\multicolumn{1}{c|}{}                                                                            & LDP(12288)    & .103±.011 & .019±.004 & .061±.007 & .235±.016 & .077±.014 & .159±.019 & .033±.002 & .003±.001 & .019±.003 \\
\multicolumn{1}{c|}{}                                                                            & Standard      & .471±.008 & .287±.012 & .414±.016 & .617±.022 & .444±.021 & .570±.012 & .420±.004 & .226±.005 & .371±.008 \\
\cmidrule{2-11} 
\multicolumn{1}{c|}{}                                                                            & Ours(512)     & .066±.009 & .006±.006 & .041±.006 & .085±.007 & .007±.005 & .061±.004 & .046±.006 & .001±.001 & .030±.003 \\
\multicolumn{1}{c|}{}                                                                            & Ours(1024)    & .110±.017 & .018±.007 & .066±.003 & .131±.005 & .014±.005 & .082±.011 & .072±.007 & .003±.002 & .040±.001 \\ 
\midrule
\multicolumn{1}{c|}{\multirow{7}{*}{\begin{tabular}[c]{@{}c@{}}DeepSeek\\ -LLM-7B\end{tabular}}} & GP(0.8)       & .381±.018 & .192±.014 & .289±.022 & .549±.021 & .347±.024 & .446±.023 & .361±.009 & .175±.007 & .249±.008 \\
\multicolumn{1}{c|}{}                                                                            & GD(0.6, 5e-3) & .002±.001 & .000±.000 & .002±.001 & .001±.001 & .000±.000 & .002±.001 & .003±.001 & .000±.000 & .002±.000 \\
\multicolumn{1}{c|}{}                                                                            & LDP(8192)     & .033±.002 & .003±.001 & .016±.001 & .180±.011 & .051±.011 & .109±.005 & .020±.003 & .001±.000 & .009±.002 \\
\multicolumn{1}{c|}{}                                                                            & LDP(10240)    & .032±.002 & .003±.001 & .015±.001 & .187±.010 & .057±.007 & .113±.005 & .025±.005 & .000±.000 & .011±.002 \\
\multicolumn{1}{c|}{}                                                                            & Standard      & .434±.017 & .250±.015 & .358±.013 & .644±.006 & .473±.023 & .576±.022 & .383±.022 & .211±.015 & .307±.016 \\

\cmidrule{2-11} 

\multicolumn{1}{c|}{}                                                                            & Ours(512)     & .015±.002 & .001±.001 & .007±.000 & .072±.016 & .005±.003 & .034±.007 & .018±.004 & .000±.000 & .007±.002 \\
\multicolumn{1}{c|}{}                                                                            & Ours(1024)    & .057±.008 & .005±.001 & .028±.003 & .166±.020 & .021±.007 & .078±.012 & .066±.005 & .002±.001 & .030±.004 \\ 
\bottomrule

\end{tabular}

}

\caption{\textit{Comparison of defense performance against the BiSR(b) attack paradigm using cosine distance as the matching objective across three models.}}
\label{tab_appendix:cos}

\end{table*}

\begin{figure*}[t]
    \centering
    \includegraphics[width=0.94\linewidth]{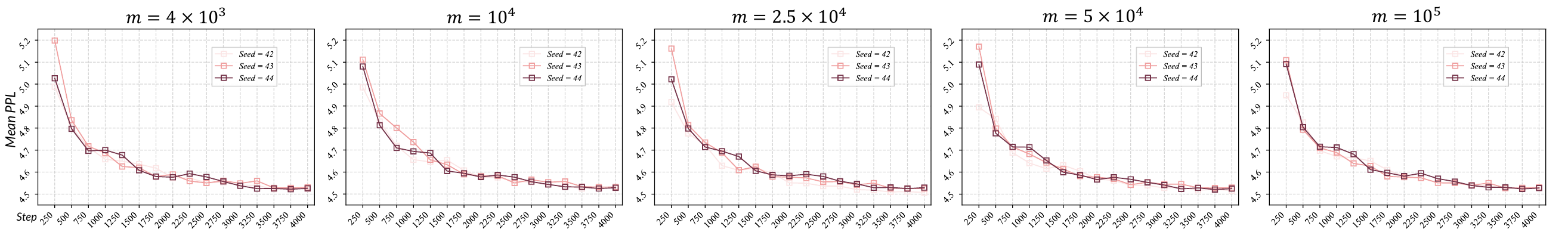}
    \caption{\textit{Fine-tuning performance under extremely large mean scales $m$ with the Trunk-Frozen Training strategy, evaluated on Llama-3-8B with the CodeAlpaca dataset.} The proposed strategy maintains stable convergence across all tested scales and random seeds.}
    \label{fig_Appendix:scale_large}
    \vspace{-2pt}
\end{figure*}

\clearpage

\begin{figure*}[t]
    \centering
    \includegraphics[width=0.96\linewidth]{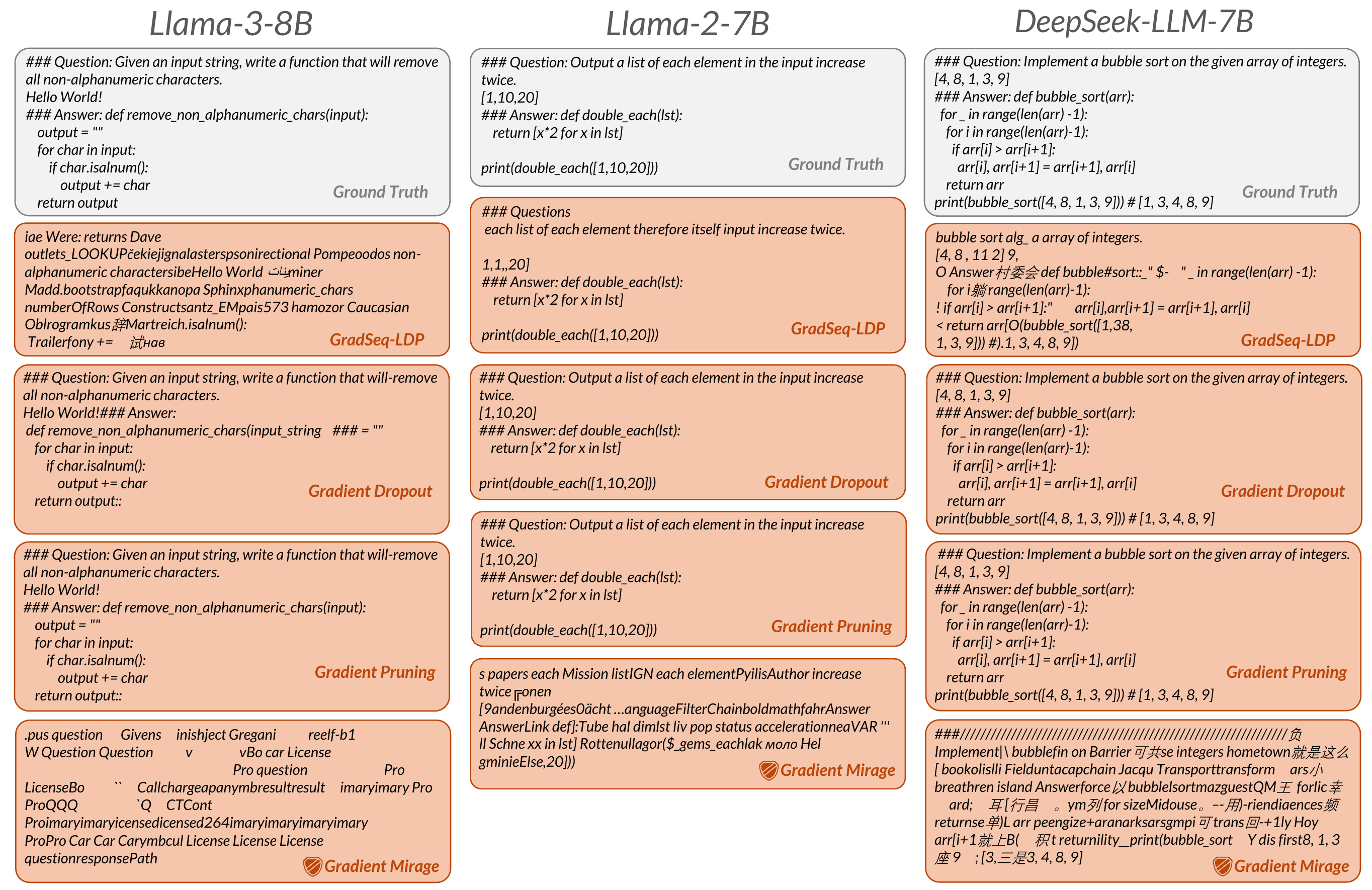}
    \caption{\textit{Qualitative comparison of reconstructed inputs under different privacy-preserving methods across Llama-3-8B, Llama-2-7B, and DeepSeek-LLM-7B.} Compared with existing defenses, Gradient Mirage yields substantially more distorted and incoherent reconstructions.}
    \label{fig_Appendix:vis}
\end{figure*}

\begin{figure*}[t]
    \centering
    \includegraphics[width=0.98\linewidth]{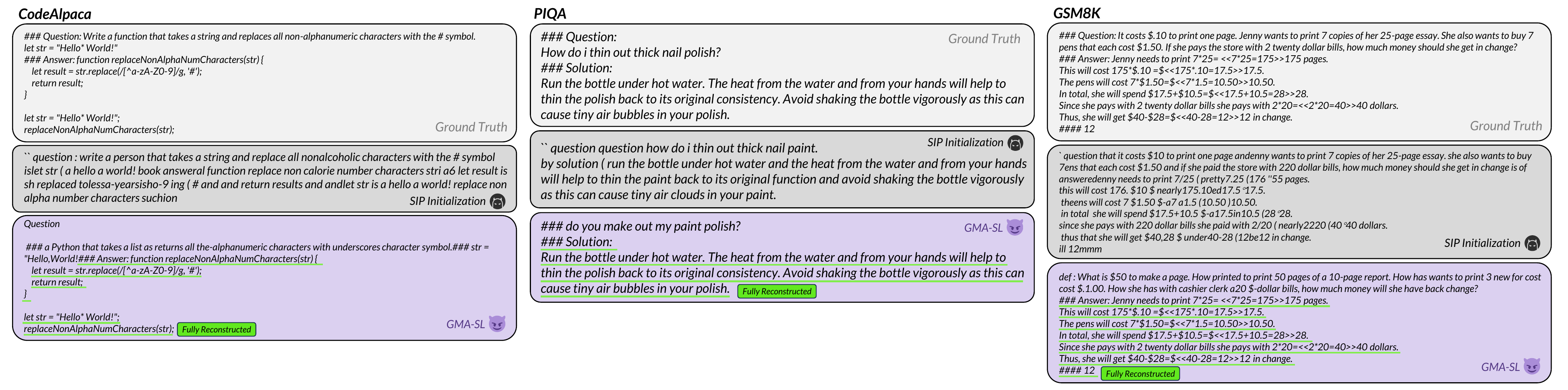}
    \caption{\textit{Qualitative results of GMA-SL under instruction fine-tuning across CodeAlpaca, PIQA, and GSM8K.} Although the query tokens are excluded from supervision during user-side training, GMA-SL successfully reconstructs the supervised answer portion.}
    \label{fig_Appendix:GMA-SFT}
\end{figure*}
\clearpage

\section{Visualization of Gradient Mirage Performance}
Fig.~\ref{fig_Appendix:vis} presents representative defense performance across three LLMs. Compared with existing defense methods, Gradient Mirage produces substantially more distorted and incoherent reconstructions, revealing considerably less information about the ground-truth input. These examples visually demonstrate the effectiveness of Gradient Mirage in defending against GMA-SL.

\section{Future Work}
Beyond the settings evaluated in our main experiments, we additionally find that GMA-SL remains effective in the instruction fine-tuning setting. Under all attack settings considered for GMA-SL, the labels used in user-side training are constructed by left-shifting the entire sequence by one token, meaning that supervision is applied to all tokens. However, we further find that under instruction fine-tuning, where the query part is not supervised, GMA-SL still poses a substantial threat to the answer part. 

GMA-SL matches the interface gradients computed under full-sequence supervision. However, we apply the same attack assumption to the instruction fine-tuning setting, where only the answer tokens are supervised. Let the sequence be decomposed as \(x = (x^{q}, x^{a})\), where \(x^{q}\) and \(x^{a}\) denote the query and answer parts, respectively. The optimization objective can be written as
$
\mathcal{L}_{\text{ans}} = \sum_{t \in \mathcal{I}_{a}} \ell\!\left(f_\theta(x)_{t}, y_t\right),
$
where \(\mathcal{I}_{a}\) is the set of positions belonging to the answer part. Even under this mismatch, namely, attacking the instruction fine-tuning scenario with the full-supervision assumption, we find that the reconstruction quality for the answer part remains remarkably high, whereas the reconstruction quality for the query part is relatively poor. Representative examples on CodeAlpaca, GSM8K, and PIQA are shown in Fig.~\ref{fig_Appendix:GMA-SFT}.

Looking forward, we plan to extend Gradient Mirage to the study of gradient privacy attacks and defenses in multimodal large language models. Several deeper questions also warrant further investigation, including the underlying mechanisms of GMA-SL and, in particular, why it remains effective under instruction fine-tuning despite supervision being restricted to the answer portion. Moreover, unlike supervised learning, reinforcement learning does not rely on explicit token-level labels. In RL-based SL, it remains an open question how to formally define the privacy leakage of reward models, as well as how an adversary could effectively extract private information from reward models.

\end{document}